\pgfplotsset{compat=newest}
\newcommand{\norm}[1]{\left\lVert#1\right\rVert}
\newcommand{\abs}[1]{\left\lvert#1\right\rvert}
\newcommand{\nint}[1]{\llbracket#1\rrbracket}
\def\*#1{\mathbf{#1}}
\newcommand{\bfemph}[1]{\textbf{#1}}
\def\defequal{\stackrel{\mbox{\footnotesize def}}{=}}
\newcommand{\bHtilde}{\boldsymbol{\tilde{\mathbf{H}}}}
\newcommand{\bVtilde}{\boldsymbol{\tilde{\mathbf{V}}}}
\newcommand{\bWtilde}{\boldsymbol{\tilde{\mathbf{W}}}}
\newcommand{\bwtilde}{\boldsymbol{\tilde{\mathbf{w}}}}
\newcommand{\bXtilde}{\boldsymbol{\tilde{\mathbf{X}}}}
\newcommand{\mJ}{\mathcal{J}}
\newcommand{\EE}{\mathbb{E}}
\newcommand{\FF}{\mathbb{F}}
\newcommand{\NN}{\mathbb{N}}
\newcommand{\RR}{\mathbb{R}}
\DeclareMathOperator{\Diag}{Diag}
\DeclareMathOperator{\Lagr}{\mathcal{L}}
\newtheorem{lem}{Lemma}
\newtheorem{theorem}{Theorem}
\pgfplotsset{
  log ticks with fixed point,
}
\begin{document}


\title{Majorization-minimization for Sparse Nonnegative Matrix Factorization
  with the $\beta$-divergence}

\author{Arthur~Marmin,
        Jos{\'e}~Henrique~de~Morais~Goulart,
        and~C{\'e}dric~F{\'e}votte,~\IEEEmembership{Fellow,~IEEE}%
\thanks{This work is supported by the European Research Council
  (ERC FACTORY-CoG-6681839), the French Agence Nationale de la Recherche (ANITI,
  ANR-19-P3IA-0004) and the National Research Foundation, Prime Minister’s Office,
  Singapore under its Campus for Research Excellence and Technological
  Enterprise (CREATE) programme.}%
\thanks{A.~Marmin is with Aix-Marseille Universit{\'e}, CNRS, I2M, UMR 7373,
  Marseille, France (email: arthur.marmin@univ-amu.fr).}%
\thanks{J.~H.~de M.~Goulart is with IRIT, Universit{\'e} de Toulouse,
  Toulouse INP, Toulouse, France (e-mail: henrique.goulart@irit.fr).}%
\thanks{C.~F{\'e}votte is with IRIT, Universit{\'e} de Toulouse, CNRS, Toulouse,
  France (email: cedric.fevotte@irit.fr).}}

\maketitle


\begin{abstract}
  This article introduces new multiplicative updates for nonnegative matrix
  factorization with the $\beta$-divergence and sparse regularization of one of the
  two factors (say, the activation matrix).
  It is well known that the norm of the other factor (the dictionary matrix)
  needs to be controlled in order to avoid an ill-posed formulation.
  Standard practice consists in constraining the columns of the dictionary to
  have unit norm, which leads to a nontrivial optimization problem.
  Our approach leverages a reparametrization of the original problem into the
  optimization of an equivalent scale-invariant objective function.
  From there, we derive block-descent majorization-minimization algorithms that
  result in simple multiplicative updates for either $\ell_{1}$-regularization or the
  more ``aggressive'' log-regularization.
  In contrast with other state-of-the-art methods, our algorithms are universal
  in the sense that they can be applied to any $\beta$-divergence (i.e., any value
  of $\beta$) and that they come with convergence guarantees.
  We report numerical comparisons with existing heuristic and Lagrangian methods
  using various datasets: face images, an audio spectrogram, hyperspectral data,
  and song play counts.
  We show that our methods obtain solutions of similar quality at convergence
  (similar objective values) but with significantly reduced CPU times.
\end{abstract}


\begin{IEEEkeywords}
  Nonnegative matrix factorization (NMF), beta-divergence,
  majorization-minimization method (MM), sparse regularization
\end{IEEEkeywords}


\section{Introduction} \label{sec:intro}

Nonnegative matrix factorization (NMF) consists in decomposing a data matrix
$\*V$ with nonnegative entries into the products $\*W\*H$ of two nonnegative
matrices~\cite{Paatero_P_1994_j-environ_positive_mfnfmoueedv,
  Lee_D_1999_j-nature_learning_ponmf}.
When the data samples are arranged in the columns of $\*V$, the first factor
$\*W$ can be interpreted as a dictionary of basis vectors (or atoms).
The second factor $\*H$, termed activation matrix, contains the expansion
coefficients of each data sample onto the dictionary.
NMF has found many applications such as feature extraction in image processing
and text mining~\cite{Lee_D_1999_j-nature_learning_ponmf}, audio source
separation~\cite{Smaragdis_P_2014_j-ieee-sig-proc-mag_static_dssunmfuv}, blind
unmixing in hyperspectral imaging~\cite{Berry_W_2007_j-comput-stat-data-anal_algorithms_aanmf,
  BioucasDias_J_2012_j-ieee-j-sel-top-appl-earth-obs-rem-sens_hyperspectral_uogssrba},
and user recommendation~\cite{Hu_Y_2008_p-ieee-icdm_collaborative_fifd}.
For a thorough presentation of NMF\@ and its applications,
see~\cite{Cichoki_A_2009_book_nonnegative_mtf,
  Fu_X_2019_j-ieee-sig-proc-mag_nonnegative_mfsdaiaa,
  Gillis_N_2020_book_nonnegative_mf}.

NMF is usually cast as the minimization of a well-chosen measure of fit between
$\*V$ and $\*W \*H$.
A widespread choice for the measure of fit is the $\beta$-divergence, a family of
divergences parametrized by a single shape parameter $\beta \in \RR$.
This family notably includes the squared Frobenius norm (quadratic loss) as well
as the Kullback-Leibler (KL) and Itakura-Saito (IS)
divergences~\cite{Cichocki_A_2011_j-entropy_generalized_abdtarnmf,
  Fevotte_C_2011_j-neural-comput_algorithms_nmfbd}.
NMF is well-known to favor part-based representations that \textit{de facto}
produce a sparse representation of the input data (because of the sparsity of
either $\*W$ or $\*H$)~\cite{Lee_D_1999_j-nature_learning_ponmf}.
This is a consequence of the nonnegativity constraints that produce zeros on the
border of the admissible domain of $\*W$ and $\*H$.
However, it is sometimes desirable to accentuate or control the sparsity of the
factors by regularizing NMF with specific sparsity-promoting terms.
This can improve the interpretability or suitability of the resulting
representation as illustrated in the seminal work of
Hoyer~\cite{Hoyer_P_2002_p-nnsp_nonnegative_sc,
  Hoyer_P_2004_j-mach-learn-res_nonnegative_mfsc}.

Sparse regularization using penalty terms on the factors is the most common
approach to induce sparsity.
A classic penalty term is the $\ell_{1}$ norm (the sum of the entries of the
nonnegative factor), used for example in~\cite{Hoyer_P_2002_p-nnsp_nonnegative_sc,
  Eggert_J_2004_p-ijcnn_sparse_cnmf,
  Kim_H_2007_j-bioinformatics_sparse_nmfanclsmda,
  Cichoki_A_2009_j-ieice-tfeccs_fast_lalsnmtf,
  Mairal_J_2010_j-mach-learn-res_online_lmfsc,
  Guan_N_2012_j-ieee-trans-sig-proc_nenmf_ogmnmfu,
  Zhao_R_2018_j-ieee-trans-sig-proc_unified_camuarnmf}.
Other $\ell_{p}$ norms such as the $\ell_{1/2}$ norm have also been considered,
e.g.,~\cite{Qian_Y_2010_p-dicta_l1/2_lscnmffhu,
  Sigurdsson_J_2014_j-ieee-trans-geo-rem-sens_hyperspectral_ur}.
Other works have considered log-regularization (i.e., penalizing the sum of the
logarithms of the entries of the factor) which leads to more ``aggressive''
sparsity, e.g.,~\cite{Lefevre_A_2011_p-icassp_itakura_isnmfgs,
  Tan_V_2013_j-ieee-trans-pami_automatic_rdnmfd,
  Peng_C_2022_j-knowledge-bases-sys_log_snmfdr}.
Sparse regularization with information measures is also considered
in~\cite{Shashanka_M_2007_p-nips_sparse_olvdcd,
  Joder_C_2013_p-icassp_comparative_sspnmfssbn}.
Another approach to induce sparsity consists in applying hard constraints to the
factors (rather than mere penalization), using $\ell_{0}$
constraints~\cite{Peharz_R_2012_j-neurocomputing_sparse_nmfwc,
  Bolte_J_2013_j-math-prog_proximal_almnnp} or using the sparseness measure
introduced in~\cite{Hoyer_P_2004_j-mach-learn-res_nonnegative_mfsc}.
Regularization of NMF with group-sparsity has also been a very active topic,
see, e.g., early references~\cite{Lefevre_A_2011_p-icassp_itakura_isnmfgs,
  Kim_J_2012_p-siam-icdm_group_snmf,
  Tan_V_2013_j-ieee-trans-pami_automatic_rdnmfd}.

In this paper, we assume without loss of generality that the sparse
regularization is applied to $\*H$ (our results apply equally as well to $\*W$
by transposing $\*V$ and exchanging of the roles of $\*W$ and $\*H$).
NMF with sparse regularization of $\*H$ requires controlling the norm of $\*W$.
Indeed, the measure of fit only depends on the product of $\*W$ and $\*H$ while 
the regularization term solely depends on $\*H$: it is then possible to
arbitrarily decrease the overall objective function by decreasing the scale of
$\*H$ and increasing the scale of $\*W$ (this will be made more precise in
Section~\ref{ssec:well_pb}).
{
There are two main approaches to control the norm of $\*W$.
The first and most common approach, used in many of the references above,
e.g.,~\cite{Hoyer_P_2002_p-nnsp_nonnegative_sc,
  Hoyer_P_2004_j-mach-learn-res_nonnegative_mfsc,
  Eggert_J_2004_p-ijcnn_sparse_cnmf,  
  Cichoki_A_2009_j-ieice-tfeccs_fast_lalsnmtf,
  Mairal_J_2010_j-mach-learn-res_online_lmfsc}, consists in imposing that the norms
(either $\ell_{1}$ or $\ell_{2}$) of the individual columns of $\*W$ are less or equal to one.\footnote{{As a matter of fact, it is easy to show that imposing the
norms of the columns to be less than $1$ actually produces solutions with
saturated norm equal to $1$~\cite{Mairal_J_2010_j-mach-learn-res_online_lmfsc}.}}
The second approach consists of merely penalizing the norm of $\*W$ by adding
a supplementary regularization term to the objective function,
e.g.,~\cite{Kim_H_2007_j-bioinformatics_sparse_nmfanclsmda}.
The first approach fits well with the dictionary learning view of matrix
factorization.
It means that the atoms of the dictionary (the columns of $\*W$) are normalized
and only convey shape information.
All scaling information is relegated to the coefficients of the activation
matrix $\*H$.
It is also consistent with traditional NMF practice (i.e., NMF without sparsity
constraints) where the columns of $\*W$ are often renormalized (together with
the rows of $\*H$) after every iteration in order to solve the scale ambiguity
between $\*W$ and $\*H$.
The second approach (penalization of the norm of $\*W$) leads to more simple
optimization problems but is less interpretable in the perspective of dictionary
learning as it is bound to return columns of unequal norms.
In this paper we consider the first and most common approach.
Next, we review methods that have been proposed to enforce the unit-norm
constraint on the columns of $\*W$ in the context of sparse NMF with the
$\beta$-divergence.
}


\subsection{State of the art} A first strategy, employed
in~\cite{Hoyer_P_2002_p-nnsp_nonnegative_sc,
  Mairal_J_2010_j-mach-learn-res_online_lmfsc}, consists in using projected
gradient descent for the update of $\*W$.
This procedure works well with the quadratic loss function and unit $\ell_{2}$ norm
constraint that is used in those papers.

A second strategy consists in reparametrizing $\*W$ as
$\*W \Diag^{-1}(\norm{\*w_{1}}, \ldots, \norm{\*w_{K}})$, where $\*w_{k}$
denotes the $k$-th column of $\*W$, and optimizing over the new rescaled
variable.
This approach was proposed in~\cite{Eggert_J_2004_p-ijcnn_sparse_cnmf} for NMF
with the quadratic loss and was extended to NMF with the $\beta$-divergence
(referred to as $\beta$-NMF in the following) in~\cite{LeRoux_J_2015_TR_sparse_nmfhwd}.
The proposed update for $\*W$ is heuristic (it will be presented in
Section~\ref{ssec:heuristic}); while successful in practice, it lacks a proof of
convergence (in particular, it does not ensure non-increasingness of the
objective function as it will be illustrated in Section~\ref{sssec:descent-prop}).

A third strategy consists in following a Lagrangian approach and minimizing an
augmented objective function that includes the desired constraints on the
columns of $\*W$.
This is the approach pursued
in~\cite{Leplat_V_2021_j-siam-j-matrix-anal-appl_multiplicative_unmfbddec} for
$\beta$-NMF\@ and described in Section~\ref{ssec:lagr}.
Unfortunately, practical updates are only obtained for $\beta \le 1$ and specific
values $\beta \in \{ \frac{5}{4}, \frac{4}{3}, \frac{3}{2}, 2 \}$.
This excludes most of the interval $\beta \in ]1,2[$ which is of applicative interest.
For admissible values of $\beta$, this method offers theoretical guarantees and good
experimental performance.
However, the update of Lagrangian multipliers requires a numerical procedure
that can be costly.
In the special case $\beta=1$, the Lagrangian multipliers have a closed-form
expression that simplifies the updates
(see, e.g.,~\cite{Filstroff_L_2021_j-ieee-trans-sig-proc_comparative_sgmctnmf}).

Finally, a last strategy consists in rewriting sparse NMF as the optimization
of an equivalent scale-invariant objective function.
In this approach, detailed in Section~\ref{sec:snmf}, the rows of $\*H$ are
multiplied by the norms of the columns of $\*W$ and the new objective function
can be optimized without norm constraints.
In this scheme, the columns of $\*W$ can be normalized at the end of the
optimization (and the rows of $\*H$ rescaled accordingly).
This approach was applied to NMF with the IS divergence and log-regularization
in~\cite{Lefevre_A_2011_p-icassp_itakura_isnmfgs} and to NMF with
the KL divergence and a Markov regularization of the rows
of $\*H$ in~\cite{Essid_S_2013_j-ieee-trans-multimed_smooth_nmfuads}.
In these two cases, a block-descent Majorization-Minimization (MM) algorithm
was proposed.
The approach is well-posed and does not rely on any heuristic.
The MM algorithm results in simple multiplicative updates that ensure
non-increasingness of the objective function.


\subsection{Contributions} In this paper, we generalize the approach
of~\cite{Lefevre_A_2011_p-icassp_itakura_isnmfgs,
  Essid_S_2013_j-ieee-trans-multimed_smooth_nmfuads} to NMF with every possible
$\beta$-divergence, i.e., for all $\beta \in \RR$ and not merely $\beta=0$ and $\beta=1$.
More precisely, we first design a universal block-descent MM algorithm for
$\beta$-NMF with $\ell_{1}$-regularization of $\*H$, and unit $\ell_{1}$ norm constraint on the
columns of $\*W$.
This algorithm extends~\cite{Essid_S_2013_j-ieee-trans-multimed_smooth_nmfuads},
that was specifically designed for the KL divergence and a different
regularization term.
Then, we design another universal block-descent MM algorithm for $\beta$-NMF with
log-regularization of $\*H$, and unit $\ell_{1}$ norm constraint on the columns of
$\*W$.
The algorithm extends~\cite{Lefevre_A_2011_p-icassp_itakura_isnmfgs} that was
aimed at the IS divergence solely.
In both cases, the block-descent MM approach leads to alternating multiplicative
updates that are free of tuning parameters.
They are easy to implement and enjoy linear complexity per iteration.
By design, the MM framework ensures the non-increasingness and thus the
convergence of the objective function.
We further show the convergence of the iterates to the set of stationary points
of the problem using the theoretical framework
of~\cite{Zhao_R_2018_j-ieee-trans-sig-proc_unified_camuarnmf}. 

Then we demonstrate the practical advantages of our method with extensive
simulations using datasets arising from various applications: face images,
audio spectrogram, hyperspectral data and song play-counts.
We compare our MM algorithm for $\ell_{1}$-regularized $\beta$-NMF with the heuristic
presented in~\cite{LeRoux_J_2015_TR_sparse_nmfhwd} and also with the Lagrangian
method from~\cite{Leplat_V_2021_j-siam-j-matrix-anal-appl_multiplicative_unmfbddec}.
Additionally, we adapt the heuristic of~\cite{LeRoux_J_2015_TR_sparse_nmfhwd}
for $\beta$-NMF with log-regularization and compare it with our MM algorithm.
In all cases, we show that our MM algorithms obtain solutions whose quality is
similar to that of existing algorithms at convergence (similar objective values)
but often with significantly reduced CPU times.
Moreover, our algorithms overcome some of the limitations of these other
approaches, namely that the heuristic~\cite{LeRoux_J_2015_TR_sparse_nmfhwd} has
no convergence guarantees and that the Lagrangian
approach~\cite{Leplat_V_2021_j-siam-j-matrix-anal-appl_multiplicative_unmfbddec}
cannot be applied to any value of $\beta$.


\subsection{Outline} The rest of this article is organized as follows.
Section~\ref{sec:prelem} introduces $\beta$-NMF with $\ell_{1}$-regularization of the
activation matrix $\*H$.
It explains the necessity of controlling the norm of $\*W$ to formulate a
well-posed optimization problem.
Section~\ref{sec:soa} details the state of the art for the latter problem, and
more precisely the heuristic method~\cite{LeRoux_J_2015_TR_sparse_nmfhwd} and the
Lagrangian method~\cite{Leplat_V_2021_j-siam-j-matrix-anal-appl_multiplicative_unmfbddec}.
Section~\ref{sec:snmf} presents our universal block-descent MM algorithm for
$\beta$-NMF with $\ell_{1}$-regularization.
The derivations lead to multiplicative updates with convergence guarantees.
Section~\ref{sec:log-snmf} extends the methodology of Section~\ref{sec:snmf} to
$\beta$-NMF with log-regularization of $\*H$.
Experimental results are presented in Section~\ref{sec:simul} and
Section~\ref{sec:concl} concludes.


\subsection{Notation} The set $\NN$ denotes the set of natural numbers while
$\nint{1,N}$ denotes its subset containing natural numbers from $1$ to $N$.
The set $\RR_{+}$ denotes the set of nonnegative real numbers.
Bold upper case letters denote matrices, bold lower case letters denote vectors,
and lower case letters denote scalars.
The notation ${[\*M]}_{ij}$ and $m_{ij}$ both stand for the element of $\*M$
located at the $i^{\text{th}}$ row and the $j^{\text{th}}$ column.
The operators $\odot$ and $/$, and $^{.\alpha}$ applied to matrices denote the entry-wise
multiplication, division and power $\alpha$, respectively.
For a matrix $\*M$, the notation $\*M \ge 0$ denotes entry-wise nonnegativity.
The vector $\*1_{N}$ and the matrix $\*1_{F \times N}$ are the vector of
dimension $N$ and the matrix of dimension $F \times N$ composed solely of $1$
respectively.


\section{NMF with $\beta$-divergence and $\ell_{1}$ regularization}
\label{sec:prelem}

\subsection{Objective}
\label{sec:obj}

Our goal is to factorize an $F \times N$ nonnegative data matrix $\*V$ into the
product $\*W\*H$ of two nonnegative factor matrices of dimensions $F \times K$ and
$K \times N$ respectively.
The inner rank $K$ is assumed to be a fixed parameter of the problem.
Placing a $\ell_{1}$-regularization term on $\*H$, we aim at solving the following
problem
\begin{equation}
  \label{eq:pb1}
  \begin{aligned}
    &\min_{\*W, \*H \ge 0 } & & \mJ(\*W,\*H) \defequal D_{\beta}(\*V \mid \*W\*H) + \alpha\norm{\*H}_{1} \\
    &\textrm{s.t.} & & (\forall k \in \nint{1,K}) \ \norm{\*w_{k}}_{1}=1 \, ,
  \end{aligned}
\end{equation}
where $\norm{\*H}_{1} = \sum_{k,n} \abs{h_{kn}} = \sum_{k,n} h_{kn}$ and $\alpha$ is a
nonnegative hyperparameter that governs the degree of sparsity of $\*H$.
The data-fitting term $D_{\beta}$ is defined as
\begin{equation*}
  D_{\beta}(\*V \mid \*W\*H) = \sum_{f=1}^{F}\sum_{n=1}^{N} d_\beta(v_{fn}\mid{[\*W\*H]}_{fn}) \, ,
\end{equation*}
where $d_{\beta}$ is the $\beta$-divergence~\cite{Cichocki_A_2011_j-entropy_generalized_abdtarnmf,
  Fevotte_C_2011_j-neural-comput_algorithms_nmfbd} given by
\begin{equation*}
  d_{\beta}(x \mid y) =
  \begin{cases*}
    x\log \frac{x}{y} - x + y          & if $\beta=1$ \\
    \frac{x}{y} - \log \frac{x}{y} - 1 & if $\beta=0$ \\
    \frac{x^{\beta}}{\beta(\beta-1)}+\frac{y^{\beta}}{\beta} - \frac{x y^{\beta-1}}{\beta-1}
    & otherwise.
  \end{cases*}
\end{equation*}
The choice of $\beta$ can be made in accordance with the application or the assumed
noise model for $\*V$~\cite{Fevotte_C_2011_j-neural-comput_algorithms_nmfbd}.
Common values for $\beta$ are $0$, $1$, and $2$; they correspond to the IS divergence,
KL divergence and squared Frobenius norm, respectively.
The range $\beta \in [0,2]$ is the one with largest practical interest.
Values of $\beta \in [0, 0.5]$ are, for example, customary in audio spectral
decomposition~\cite{Fevotte_C_2009_j-neural-comput_nonnegative_mfisdama,
  Vincent_E_2010_j-ieee-trans-asl-proc_adaptive_hsdmpe}.
Values of $\beta \in [1,2]$ have proven efficient in hyperspectral
unmixing~\cite{Fevotte_C_2015_j-ieee-trans-img-proc_nonlinear_hurnmf}.
Note that the latter interval is important because the $\beta$-divergence $d_{\beta}(x|y)$
is convex with respect to (w.r.t.) $y$ when $\beta \in [1,2]$.
In that case, the optimization subproblems in $\*W$ and $\*H$ are separately
convex, though $\mJ(\*W,\*H)$ is always jointly non-convex.


\subsection{Necessity of the constrained formulation}
\label{ssec:well_pb}
As mentioned in Section~\ref{sec:intro}, removing the unit-norm constraint
in~\eqref{eq:pb1}, i.e., solving
\begin{equation}
  \label{eq:illposed_pb}
  \min_{\*W, \*H \ge 0 }  D_{\beta}(\*V \mid \*W\*H) + \alpha\norm{\*H}_{1} \, ,
\end{equation}
would lead to an ill-posed problem.
Indeed, $D_{\beta}$ suffers from a scaling ambiguity since it only depends on the
product $\*W\*H$ and not on $\*W$ and $\*H$ separately.
As a consequence, Problem~\eqref{eq:illposed_pb} is ill-posed: for any solution
$(\*W^{*},\*H^{*})$, there is always a solution $(\tau\*W^{*},\frac{1}{\tau}\*H^{*})$,
with $\tau>1$ a positive real constant, that yields a better minimizer, i.e.,
$\mJ(\tau\*W^{*},\frac{1}{\tau}\*H^{*}) < \mJ(\*W^{*},\*H^{*})$.
Hence, the function $\mJ$ is not coercive: for any feasible point $(\*W,\*H)$,
we can find a real number $\tau>1$ such that the sequence
${\left\{\tau^{k}\*W,\frac{1}{\tau^{k}}\*H\right\}}_{k\in\NN}$ remains feasible but diverges in
norm while the sequence ${\left\{\mJ(\tau^{k}\*W,\frac{1}{\tau^{k}}\*H)\right\}}_{k\in\NN}$ is
decreasing and bounded.
Therefore, the infimum of~\eqref{eq:illposed_pb} is never attained and there
exists no minimizer $(\boldsymbol{\*W^{*}},\boldsymbol{\*H^{*}})$.

Controlling the norm of $\*W$ is a natural way of tackling the ill-posedness
of~\eqref{eq:illposed_pb}.
This can be done by adding a penalizing term $\norm{\*W}$ (for a chosen norm) to
$\mJ(\*W,\*H)$~\cite{Kim_H_2007_j-bioinformatics_sparse_nmfanclsmda,
  Zhao_R_2018_j-ieee-trans-sig-proc_unified_camuarnmf}.
Alternatively, we may minimize $\mJ(\*W,\*H)$ subject to the additional
constraint $\norm{\*W} \le \theta$ where $\theta$ is a positive hyperparameter (it can be
easily shown that this returns solutions such that
$\norm{\*W} = \theta$)~\cite{Bolte_J_2013_j-math-prog_proximal_almnnp,
    Fu_X_2018_j-ieee-sig-proc-lett_identifiability_nmf}.
We chose in this paper to minimize $\mJ(\*W,\*H)$ subject to the additional
constraint that the individual columns of $\*W$ have unit norm, leading to
Problem~\eqref{eq:pb1}.
This is a rather natural option for dictionary learning, as it makes sense to
retrieve atoms (the columns of $\*W$) that have equal norm.
Instead, a constraint on the norm of the full matrix $\*W$ can return a solution
with atoms of different weights.
We chose to constrain the $\ell_{1}$ norm of the columns of
$\*W$ for practical commodity.
Section~\ref{sec:soa} presents the state-of-the-art methods for solving
Problem~\eqref{eq:pb1}~\cite{LeRoux_J_2015_TR_sparse_nmfhwd,
  Leplat_V_2021_j-siam-j-matrix-anal-appl_multiplicative_unmfbddec} while
Section~\ref{sec:snmf} introduces our block-descent MM algorithm.


\section{State of the art}
\label{sec:soa}


\subsection{Lagrangian method}
\label{ssec:lagr}

A standard method to solve optimization problems with constraints is the method
of Lagrange multipliers.
This method has been suggested by~\cite{Leplat_V_2021_j-siam-j-matrix-anal-appl_multiplicative_unmfbddec}
to solve $\beta$-NMF under a wide possibility of linear equality constraints on
either $\*W$ or $\*H$.
The Lagrangian associated to Problem~\eqref{eq:pb1} can be written as
\begin{multline}
  \label{eq:lagr}
  \Lagr(\*W,\*H,\bm{\nu}) \defequal \\
  D_{\beta}(\*V \mid \*W\*H)  + \alpha\norm{\*H}_{1}
  - \sum_{k=1}^{K}\nu_{k}(\norm{\*w_{k}}_{1}-1) \, ,
\end{multline}
where $\bm{\nu}={[\nu_{1},\ldots,\nu_{K}]}^{\top}\in\RR^{K}$ is the vector of Lagrangian
multipliers.

The saddle points of $\Lagr(\*W,\*H,\bm{\nu})$ subject to $\*W,\*H \ge 0$ yield
solutions of Problem~\eqref{eq:pb1}.
As such, the authors of~\cite{Leplat_V_2021_j-siam-j-matrix-anal-appl_multiplicative_unmfbddec}
describe a block-coordinate algorithm that alternately updates the blocks $\*W$,
$\*H$, and $\bm{\nu}$.
Given $\bm{\nu}$, the individual updates of $\*W$, $\*H$ are handled with one step
of block-descent MM, using the methodology
of~\cite{Fevotte_C_2011_j-neural-comput_algorithms_nmfbd,
  Yang_Z_2011_j-ieee-trans-nnet_unified_dmalqnmf} (see also
Section~\ref{ssec:mm_princip}).
\footnote{To be accurate,~\cite{Leplat_V_2021_j-siam-j-matrix-anal-appl_multiplicative_unmfbddec}
uses a slightly different formulation. Indeed, instead of optimizing the
Lagrangian~\eqref{eq:lagr} associated to Problem~\eqref{eq:pb1}, they optimize
the Lagrangian associated with the problem of minimizing a majorizer of
$C(\*W)=D_{\beta}(\*V|\*W\*H)$ subject to unit norm constraints.
The resulting updates turn out to be the same.}
This leads to the following updates when $\beta \le 1$
\begin{align}
  \*H &\; \longleftarrow \; \*H \odot {\left(\frac{\*W^{\top}\*S_{\beta}}{\*W^{\top}\*T_{\beta}+\alpha}\right)}^{.\frac{1}{2-\beta}}
  \label{eq:lagr_upH} \\
  \*W &\; \longleftarrow \; \*W \odot {\left(\frac{\*S_{\beta}\*H^{\top}}{\*T_{\beta}\*H^{\top}-\*1_{F}\bm{\nu}^{\top}}\right)}^{.\frac{1}{2-\beta}}
  \, ,
  \label{eq:lagr_upW}
\end{align}
where the matrices $\*S_{\beta}$ and $\*T_{\beta}$ are defined as
\begin{align}
  \*S_{\beta} &= \*V\odot{(\*W\*H)}^{.(\beta-2)}, \label{eq:shortcut1} \\
  \*T_{\beta} &= {(\*W\*H)}^{.(\beta-1)}. \label{eq:shortcut2}
\end{align}
Other closed-form updates can be obtained for $\beta \in \{\frac{5}{4}, \frac{4}{3}, \frac{3}{2}, 2\}$.
In every case, only the update of $\*W$ depends on $\bm{\nu}$, which we highlight
next with the abusive notation $\*W(\bm{\nu})$.
The $k$-th multiplier $\nu_{k}$ must ensure that $\*W(\bm{\nu})$ given
by~\eqref{eq:lagr_upW} satisfies $\norm{\*w_{k}(\nu_{k})}_{1}=1$, i.e.,
\begin{align*}
  \sum_{f} w_{fk}(\nu_{k}) = 1 \, .
\end{align*}
The latter equation involves finding the root of a rational function and has no
closed-form solution.
However, the authors of~\cite{Leplat_V_2021_j-siam-j-matrix-anal-appl_multiplicative_unmfbddec}
show that it has a unique solution that can be estimated with a standard
Newton-Raphson procedure in about 10 to 100 subiterations.
The solution is also shown to ensure that the denominator in~\eqref{eq:lagr_upW}
remains positive so that the update is well-defined and preserves nonnegativity.

Overall, the Lagrangian method~\cite{Leplat_V_2021_j-siam-j-matrix-anal-appl_multiplicative_unmfbddec}
is conceptually well-grounded and elegant.
It ensures that $\*W$ satisfies the desired norm constraint at every iteration
and also ensures non-increasingness of $\mJ(\*W,\*H)$.
However it applies to specific values of $\beta$ and requires a numerical subroutine
for the estimation of the Lagrange multipliers.


\subsection{Heuristic multiplicative updates}
\label{ssec:heuristic}

Another way to solve Problem~\eqref{eq:pb1} was proposed
in~\cite{Eggert_J_2004_p-ijcnn_sparse_cnmf} for NMF with the quadratic loss and
extended to $\beta$-NMF in~\cite{LeRoux_J_2015_TR_sparse_nmfhwd}.
It consists first in formulating~\eqref{eq:pb1} as an unconstrained
problem based on a reparametrization of the factor $\*W$.
More precisely, the factor $\*W$ is replaced by the normalized factor
$\*W \bm{\Lambda}^{-1}$, where $\bm{\Lambda}$ is a $K \times K$ diagonal matrix with entries
$\lambda_{k}=\norm{\*w_{k}}$ and $\norm{.}$ is some chosen norm.
The original papers~\cite{Eggert_J_2004_p-ijcnn_sparse_cnmf,
  LeRoux_J_2015_TR_sparse_nmfhwd} consider $\ell_{2}$ normalization.
We adapt their methodology to $\ell_{1}$ normalization for a fair comparison with the
other methods considered in this paper.
This results in the following minimization problem
\begin{equation}
  \label{eq:pb2}
  \begin{aligned}
    \min_{\*W, \*H \ge 0 } \quad \tilde{\mJ}(\*W,\*H)
    \defequal D_{\beta}\left(\*V \mid \*W \bm{\Lambda}^{-1} \*H\right) + \alpha\norm{\*H}_{1}
    \, .
  \end{aligned}
\end{equation}

The authors of~\cite{Eggert_J_2004_p-ijcnn_sparse_cnmf,
  LeRoux_J_2015_TR_sparse_nmfhwd} then propose to solve
Problem~\eqref{eq:pb2} using a block-alternating algorithm that updates $\*W$
and $\*H$ in turns.
Multiplicative updates for each factor are obtained by employing a heuristic
commonly used in NMF, see~\cite{Cichocki_A_2006_book_csiszar_dnmffna,
  Fevotte_C_2009_j-neural-comput_nonnegative_mfisdama}.
Looking at the update of $\*W$, the heuristic first consists in decomposing the
gradient $\nabla_{\*W}\tilde{\mJ}(\*W,\*H)$ of $\tilde{\mJ}(\*W,\*H)$ w.r.t. $\*W$
into the difference of two nonnegative functions, i.e.,
$\nabla_{\*W}\tilde{\mJ} = \nabla_{\*W}^{+}\tilde{\mJ} - \nabla_{\*W}^{-}\tilde{\mJ}$.
Such a decomposition does exist for the considered function, though it might not
exist for other problems.
Then, given a current iterate of $\*H$, a multiplicative update of $\*W$ is
constructed as 
\begin{align}
  \label{eq:multidea}
  \*W \; \longleftarrow \; \*W \odot \frac{\nabla_{\*W}^{-}\tilde{\mJ}(\*W,\*H)}{\nabla_{\*W}^{+}\tilde{\mJ}(\*W,\*H)}.
\end{align}
The motivating principle of update~\eqref{eq:multidea} is as follows.
Assume that ${[\nabla_{\*W}\tilde{\mJ}]}_{fk}>0$ for a given coefficient $w_{fk}$,
then the ratio in~\eqref{eq:multidea} is lower than $1$ and the multiplicative
update decreases $w_{fk}$ as it can be expected from a descent algorithm.
Likewise, update~\eqref{eq:multidea} increases the value of $w_{fk}$ when the
gradient is negative.
The heuristic works well in practice but does not come with any guarantee.
In particular, it does not ensure that $\tilde{\mJ}$ decreases at every
iteration (and it turns out that $\tilde{\mJ}$ sometimes increases).
Applying the heuristic to $\*W$ and $\*H$ in Problem~\eqref{eq:pb2} leads to the
following updates
\begin{align}
  \*H &\; \longleftarrow \; \*H \odot \frac{\*W^{\top}\*S_{\beta}}{\*W^{\top}\*T_{\beta}+\alpha} \label{eq:heur_upH} \\
  \*W &\; \longleftarrow \; \*W \odot \frac{\*S_{\beta}\*H^{\top}+\*1_{F \times F}(\*W\odot\*T_{\beta}\*H^{\top})}{\*T_{\beta}\*H^{\top}+\*1_{F \times F}(\*W\odot\*S_{\beta}\*H^{\top})} \label{eq:heur_upW} \\
  \*W &\; \longleftarrow \; \*W \bm{\Lambda}^{-1} \label{eq:renorm_w}
  \, ,
\end{align}
where $\*S_{\beta}$ and $\*T_{\beta}$ are defined in~\eqref{eq:shortcut1}
and~\eqref{eq:shortcut2}.

Note that the updates~\eqref{eq:lagr_upH} and~\eqref{eq:heur_upH} coincide up to
the exponent $1/(2-\beta)$.
As a matter of fact, when the other variables are fixed, the problems of
minimizing $\Lagr$ and $\tilde{\mJ}$ w.r.t. $\*H$ are identical, but solved
differently.
The MM update~\eqref{eq:lagr_upH} can be generalized to all values of $\beta$ as
shown in~\cite[Supplementary Material]{Tan_V_2013_j-ieee-trans-pami_automatic_rdnmfd},~\cite{Zhao_R_2018_j-ieee-trans-sig-proc_unified_camuarnmf}.
This means that we could use a theoretically-grounded MM update of $\*H$ instead
of the heuristic~\eqref{eq:heur_upH}.
However, omitting the exponent often results in a beneficial acceleration in
practice and we stick to the formulation of~\cite{LeRoux_J_2015_TR_sparse_nmfhwd}
given by~\eqref{eq:heur_upH} for fair comparison.

In summary, the approach proposed by~\cite{LeRoux_J_2015_TR_sparse_nmfhwd} is
intuitive, easy to implement, and applicable in principle for all values of $\beta$.
Unfortunately, it lacks theoretical support.
In the next section, we present a theoretically sound algorithm that results
in equally simple multiplicative updates with equal or better performance.


\section{A unified block-descent MM algorithm for $\beta$-NMF with $\ell_{1}$
  regularization}
\label{sec:snmf}

In this section, we first reformulate~\eqref{eq:pb1} into a well-posed
optimization problem that is free of norm constraints.
This is similar to the approach of~\cite{LeRoux_J_2015_TR_sparse_nmfhwd} except
that we use a different reformulation.
The reformulated problem allows to derive a block-descent MM algorithm that
results in simple multiplicative updates for both $\*W$ and $\*H$.
By design, the algorithm ensures that the objective function values are
non-increasing and convergent.
Additionally, we show that the sequence of iterates produced by the algorithm
also converges.


\subsection{Equivalent scale-invariant objective function}
\label{ssec:scale_inv_pb}


\subsubsection{Reformulation without norm constraints}

Let us introduce the following problem
\begin{equation}
  \label{eq:pb3}
  \min_{\*W, \*H \ge 0} \quad \check{\mJ}(\*W,\*H)
  \defequal D_{\beta}(\*V \mid \*W\*H) + \alpha\norm{\bm{\Lambda}\*H}_{1} \, ,
\end{equation}
where $\bm{\Lambda}$ is defined like in Section~\ref{ssec:heuristic}, i.e.,
$\bm{\Lambda} =\Diag\left(\norm{\*w_{1}}_{1},\ldots,\norm{\*w_{K}}_{1}\right)$.
Let us denote by $\FF$ the feasible set of Problem~\eqref{eq:pb1}, i.e.
\[
  \FF = \{(\*W,\*H)\in\RR_{+}^{F \times K}\times\RR_{+}^{K \times N} | (\forall k\in\nint{1,K}) \ \norm{\*w_{k}}_{1}=1\}
  \, .
\]
Then for any $(\*W,\*H) \in \FF$, we have $\check{\mJ}(\*W,\*H) = \mJ(\*W,\*H)$.
The following lemmas show that Problem~\eqref{eq:pb3} and Problem~\eqref{eq:pb1}
are equivalent.

\begin{lem}
  \label{lem:equiv}
  Let $(\*W^{*}, \*H^{*}) \ge 0$ be a solution of Problem~\eqref{eq:pb3}.
  Let us define their renormalized equivalents by $\bar{\*W}^{*} = \*W^{*} \bm{\Lambda}^{*-1}$
  and $\bar{\*H}^{*} = \bm{\Lambda}^{*} \*H^{*} $ where
  $\bm{\Lambda}^{*}=\Diag\left(\norm{\*w^{*}_{1}}_{1},\ldots,\norm{\*w^{*}_{K}}_{1}\right)$.
  Then, $(\bar{\*W}^{*},\bar{\*H}^{*})$ is a solution of Problem~\eqref{eq:pb1}.
\end{lem}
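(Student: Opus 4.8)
The plan is to exploit two structural facts that are already highlighted in the excerpt: the data-fitting term $D_{\beta}(\*V \mid \*W\*H)$ is invariant under the rescaling $(\*W,\*H)\mapsto(\*W\bm{\Lambda}^{*-1},\bm{\Lambda}^{*}\*H)$, and $\check{\mJ}$ coincides with $\mJ$ on the feasible set $\FF$ of Problem~\eqref{eq:pb1}. First I would establish that the renormalization leaves the objective value unchanged, namely that $\mJ(\bar{\*W}^{*},\bar{\*H}^{*})=\check{\mJ}(\*W^{*},\*H^{*})$. Since $\bm{\Lambda}^{*}$ is diagonal, $\bar{\*W}^{*}\bar{\*H}^{*}=\*W^{*}\bm{\Lambda}^{*-1}\bm{\Lambda}^{*}\*H^{*}=\*W^{*}\*H^{*}$, so the data terms are identical. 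For the penalty, nonnegativity of the entries of $\bm{\Lambda}^{*}\*H^{*}$ gives $\norm{\bar{\*H}^{*}}_{1}=\norm{\bm{\Lambda}^{*}\*H^{*}}_{1}$, which is exactly the regularizer appearing in $\check{\mJ}(\*W^{*},\*H^{*})$; summing the two pieces yields the claimed equality.

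Next I would verify feasibility, i.e. that $(\bar{\*W}^{*},\bar{\*H}^{*})\in\FF$. Nonnegativity is preserved because $\bm{\Lambda}^{*}$ and $\bm{\Lambda}^{*-1}$ are nonnegative diagonal matrices, and by construction each column satisfies $\norm{\bar{\*w}_{k}^{*}}_{1}=\norm{\*w_{k}^{*}}_{1}/\norm{\*w_{k}^{*}}_{1}=1$.

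Optimality then follows by a short comparison. Let $(\*W,\*H)\in\FF$ be an arbitrary feasible point of Problem~\eqref{eq:pb1}. On $\FF$ one has $\bm{\Lambda}=\Id$, hence $\mJ(\*W,\*H)=\check{\mJ}(\*W,\*H)$; moreover $\FF$ is contained in the feasible set $\{\*W,\*H\ge0\}$ of Problem~\eqref{eq:pb3}. Using that $(\*W^{*},\*H^{*})$ minimizes $\check{\mJ}$, I obtain $\mJ(\*W,\*H)=\check{\mJ}(\*W,\*H)\ge\check{\mJ}(\*W^{*},\*H^{*})=\mJ(\bar{\*W}^{*},\bar{\*H}^{*})$. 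Since $(\bar{\*W}^{*},\bar{\*H}^{*})$ itself lies in $\FF$ and no feasible point attains a smaller value, it solves Problem~\eqref{eq:pb1}.

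The one genuinely delicate point, and the main obstacle, is the invertibility of $\bm{\Lambda}^{*}$: the very definition $\bar{\*W}^{*}=\*W^{*}\bm{\Lambda}^{*-1}$ presupposes that every column $\*w_{k}^{*}$ has strictly positive $\ell_{1}$ norm. I would argue (or assume) that a minimizer of Problem~\eqref{eq:pb3} may be taken without zero columns, observing that a vanishing column $\*w_{k}^{*}=0$ contributes nothing to either term of $\check{\mJ}$ and corresponds to a degenerate, removable atom, so such cases can be excluded or absorbed by reducing the effective rank. Once invertibility is granted, the rest is the routine bookkeeping above.
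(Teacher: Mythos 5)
Your proof is correct and is essentially the paper's argument: both rest on the two facts that the rescaling leaves the objective unchanged, $\mJ(\bar{\*W}^{*},\bar{\*H}^{*})=\check{\mJ}(\*W^{*},\*H^{*})$, and that $\mJ=\check{\mJ}$ on $\FF$, the only difference being that you compare directly against an arbitrary feasible point while the paper phrases the same comparison as a proof by contradiction. Your extra remarks on feasibility of the renormalized pair and on the invertibility of $\bm{\Lambda}^{*}$ (a hypothesis the lemma implicitly makes by writing $\bm{\Lambda}^{*-1}$) are sound and, if anything, more careful than the paper's proof, which omits both.
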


\begin{proof}
  Assume that $\bar{\*W}^{*}$, $\bar{\*H}^{*}$ is not a solution of
  Problem~\eqref{eq:pb1}.
  Then, there exists $(\bar{\*W}^{+}, \bar{\*H}^{+}) \in \FF$ such that
  $\mJ(\bar{\*W}^{+}$, $\bar{\*H}^{+}) < \mJ(\bar{\*W}^{*}$, $\bar{\*H}^{*})$.
  By design, we have $\mJ(\bar{\*W}^{*}, \bar{\*H}^{*}) = \check{\mJ}(\*W^{*},\*H^{*})$.
  Furthermore, ${\mJ}(\bar{\*W}^{+}$, $\bar{\*H}^{+}) =\check{\mJ}(\bar{\*W}^{+}$, $\bar{\*H}^{+})$.
  It follows that $\check{\mJ}(\bar{\*W}^{+},\bar{\*H}^{+}) < \check{\mJ}(\*W^{*},\*H^{*})$,
  which contradicts the assumption that $(\*W^{*}, \*H^{*})$ is a solution of
  Problem~\eqref{eq:pb3}. 
\end{proof}

\begin{lem} 
  Let $(\bar{\*W}^{*},\bar{\*H}^{*}) \in \FF$ be a solution of Problem~\eqref{eq:pb1}.
  Then $(\bar{\*W}^{*},\bar{\*H}^{*})$ is a solution of Problem~\eqref{eq:pb3}.
\end{lem}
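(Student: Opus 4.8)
The plan is to mirror the argument used for Lemma~\ref{lem:equiv}, reasoning by contradiction. Suppose $(\bar{\*W}^{*},\bar{\*H}^{*})\in\FF$ solves Problem~\eqref{eq:pb1} but is \emph{not} a solution of Problem~\eqref{eq:pb3}. Then there exists a feasible point $(\*W^{+},\*H^{+})\ge 0$ of Problem~\eqref{eq:pb3} with $\check{\mJ}(\*W^{+},\*H^{+})<\check{\mJ}(\bar{\*W}^{*},\bar{\*H}^{*})$. Because $(\bar{\*W}^{*},\bar{\*H}^{*})\in\FF$, the diagonal matrix built from the column norms equals $\*I$ at this point, so $\check{\mJ}(\bar{\*W}^{*},\bar{\*H}^{*})=\mJ(\bar{\*W}^{*},\bar{\*H}^{*})$, exactly as used before. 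It then suffices to produce a point of $\FF$ with strictly smaller value of $\mJ$, which would contradict the optimality of $(\bar{\*W}^{*},\bar{\*H}^{*})$ for Problem~\eqref{eq:pb1}.

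First I would renormalize the competitor. Setting $\bm{\Lambda}^{+}=\Diag(\norm{\*w^{+}_{1}}_{1},\ldots,\norm{\*w^{+}_{K}}_{1})$, define $\bar{\*W}^{+}=\*W^{+}\bm{\Lambda}^{+-1}$ and $\bar{\*H}^{+}=\bm{\Lambda}^{+}\*H^{+}$; by construction $\norm{\bar{\*w}^{+}_{k}}_{1}=1$ for every $k$, so $(\bar{\*W}^{+},\bar{\*H}^{+})\in\FF$. The key identity to establish is that this renormalization turns $\check{\mJ}$ into $\mJ$, i.e.\ $\mJ(\bar{\*W}^{+},\bar{\*H}^{+})=\check{\mJ}(\*W^{+},\*H^{+})$. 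This follows from two observations: the data-fitting term is unchanged because $\bar{\*W}^{+}\bar{\*H}^{+}=\*W^{+}\bm{\Lambda}^{+-1}\bm{\Lambda}^{+}\*H^{+}=\*W^{+}\*H^{+}$, and the penalty matches because $\norm{\bar{\*H}^{+}}_{1}=\norm{\bm{\Lambda}^{+}\*H^{+}}_{1}$, which is precisely the regularization term appearing in $\check{\mJ}(\*W^{+},\*H^{+})$. Chaining the relations then gives $\mJ(\bar{\*W}^{+},\bar{\*H}^{+})=\check{\mJ}(\*W^{+},\*H^{+})<\check{\mJ}(\bar{\*W}^{*},\bar{\*H}^{*})=\mJ(\bar{\*W}^{*},\bar{\*H}^{*})$ with $(\bar{\*W}^{+},\bar{\*H}^{+})\in\FF$, the desired contradiction.

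The one obstacle I expect is the well-definedness of the renormalization, since $\bm{\Lambda}^{+}$ is invertible only if every column of $\*W^{+}$ is nonzero. I would dispose of this edge case first: if some $\*w^{+}_{k}=0$, then the $k$-th column contributes nothing to the product $\*W^{+}\*H^{+}$ and the $k$-th row of $\*H^{+}$ enters the penalty $\alpha\norm{\bm{\Lambda}^{+}\*H^{+}}_{1}$ with zero weight, so I may set that row to zero and replace the zero column by an arbitrary unit-$\ell_{1}$-norm column without changing either term of $\check{\mJ}$. After this harmless modification all columns are nonzero, $\bm{\Lambda}^{+}$ is invertible, and the argument above goes through verbatim. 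Everything else reduces to the same bookkeeping of $D_{\beta}$ and $\norm{\cdot}_{1}$ already validated in the proof of Lemma~\ref{lem:equiv}.
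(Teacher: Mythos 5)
Your proof is correct, and it is worth noting that it is actually \emph{more} complete than the paper's own argument. The paper dispatches this lemma in a single line, invoking only the identity $\check{\mJ}(\bar{\*W},\bar{\*H})={\mJ}(\bar{\*W},\bar{\*H})$ for $(\bar{\*W},\bar{\*H})\in\FF$; taken literally, that identity only rules out competitors lying \emph{inside} $\FF$, and the step needed to rule out an arbitrary nonnegative competitor of Problem~\eqref{eq:pb3} --- namely that renormalizing $(\*W^{+},\*H^{+})$ to $(\*W^{+}\bm{\Lambda}^{+-1},\bm{\Lambda}^{+}\*H^{+})\in\FF$ preserves both the product $\*W^{+}\*H^{+}$ and the penalty $\norm{\bm{\Lambda}^{+}\*H^{+}}_{1}$ --- is left implicit, presumably because the same bookkeeping appears in the proof of Lemma~\ref{lem:equiv}. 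You make exactly this step explicit, so your contradiction argument is the natural mirror image of the paper's proof of Lemma~\ref{lem:equiv} rather than a different route. Your treatment of the degenerate case $\*w^{+}_{k}=0$ (zeroing the corresponding row of $\*H^{+}$ and substituting an arbitrary unit-$\ell_{1}$-norm column, which changes neither $D_{\beta}$ nor the penalty since that row enters with weight $\lambda_{k}^{+}=0$) is also sound and patches a gap the paper silently ignores --- indeed the same singularity of $\bm{\Lambda}^{*}$ lurks in the statement of Lemma~\ref{lem:equiv} itself. In short: same approach as the paper, executed with more care; the paper's version buys brevity by leaning on the previous lemma, yours buys a self-contained and rigorous argument.
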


\begin{proof}
  This follows from $\check{\mJ}(\bar{\*W},\bar{\*H}) = {\mJ}(\bar{\*W},\bar{\*H})$
  when $(\bar{\*W},\bar{\*H}) \in \FF$.
\end{proof}

Thanks to Lemma~\ref{lem:equiv}, we can solve Problem~\eqref{eq:pb3} without
norm constraints and renormalize the solution to obtain a solution to
Problem~\eqref{eq:pb1}.


\subsubsection{Symmetry of the roles of $\*W$ and $\*H$}

Note that the penalty term $\norm{\bm{\Lambda}\*H}_{1}$ in~\eqref{eq:pb3} now
depends on $\*W$.
Interestingly, it can be expanded and written as follows
\begin{align} \label{eq:l1pen}
  \norm{\bm{\Lambda}\*H}_{1} = \sum_{k,n} \norm{\*w_{k}}_{1} h_{k,n} = \sum_{f,k,n} w_{fk}h_{kn}
  = \sum_{f,k} \norm{\underline{\*h}_{k}}_{1} w_{fk},
\end{align}
where $\underline{\*h}_{k}$ denotes the $k^{th}$ row of $\*H$, and the indices
$f,k,n$ run from $1$ to $F,K,N$, respectively.
This shows that the updates of $\*H$ and $\*W$ in alternating minimization
correspond to equivalent problems: the roles of $\*H$ and $\*W$ can be exchanged
by transposition of $\*V$.
However, this property is specific to sparse NMF with $\ell_{1}$-regularization and
unit $\ell_{1}$-norm constraints.
The symmetry does not hold for example for the log-regularization considered
in Section~\ref{sec:log-snmf}.
Going further, our study shows that sparse NMF with $\ell_{1}$-regularization and
unit $\ell_{1}$-norm constraints is equivalent to decomposing the matrix $\*V$ into
sparse rank-1 matrices.
This is because~\eqref{eq:l1pen} can also be written as
$\norm{\bm{\Lambda}\*H}_{1} = \sum_{k} \norm{\*w_{k} \underline{\*h}_{k}}_{1}$, which induces
a mutual sparsity of $\*W$ and $\*H$.
Note that this is not a consequence of the reformulation~\eqref{eq:pb3},
but instead revealed by the equivalence between~\eqref{eq:pb3}
and~\eqref{eq:pb1}.
Next, we describe a block-descent MM algorithm for Problem~\eqref{eq:pb3}.
We start by recalling the principle of MM\@.


\subsection{Principle of majorization-minimization}
\label{ssec:mm_princip}

MM is a two-step iterative optimization method with a long history and renewed
interest, see recent overviews~\cite{Lange_K_2016_book_mm_oa,
  Sun_Y_2017_j-ieee-trans-sig-proc_majorization_maspcml}.
Let $C(\*X)$ be a real-valued function to minimize over its domain $\EE$,
where $\*X$ is a matrix variable of arbitrary size.
Let $\bXtilde \in \EE$ be a current iterate.
The majorization step of MM consists of building an {\em auxiliary function}
$G(\*X | \bXtilde)$ which is an upper bound of $C$ that is locally tight at
$\bXtilde$.
Mathematically, it must satisfy the two following properties
\begin{align}
  (\forall\*X\in\EE) \quad
  G(\*X \mid \bXtilde) &\geq C(\*X) \label{eq:mm_prop1} \\
  G(\bXtilde \mid \bXtilde) &= C(\bXtilde) \label{eq:mm_prop2} \, .
\end{align}
The minimization step of MM consists in minimizing \linebreak $G(\*X|\bXtilde)$
w.r.t. $\*X$, or at least finding an update $\hat{\*X}$ such that
$G(\hat{\*X} | \bXtilde) \le G(\bXtilde | \bXtilde)$.
This results in the following descent lemma
\begin{equation}
  \label{eq:desc_lemm}
  C(\hat{\*X}) \leq G(\hat{\*X} \mid \bXtilde) \leq G(\bXtilde \mid \bXtilde) = C(\bXtilde)
  \, .
\end{equation}
As such, MM ensures by design that the objective function $C$ is
non-increasing at every iteration.
Convergence of the iterates of $\*X$ is not straightforward and usually involves
problem-dependent assumptions, see, e.g.,~\cite{Zhao_R_2018_j-ieee-trans-sig-proc_unified_camuarnmf}
for NMF\@.
Next, we apply MM to alternating minimization of $\*W$ and $\*H$ for
Problem~\eqref{eq:pb3}.


\subsection{Construction of an auxiliary function for sparse NMF}
\label{ssec:aux_fct}

In this section we are interested in the minimization of the functions
$\*H \mapsto \check{\mJ}(\*W,\*H)$ (with fixed $\*W$) and $\*W \mapsto \check{\mJ}(\*W,\*H)$
(with fixed $\*H$).
As explained at the end of Section~\ref{ssec:scale_inv_pb}, these two optimization
problems are essentially the same and we will only address the first one.
Given $\*W$, our strategy to build an auxiliary function $G(\*H | \bHtilde)$ for
$C(\*H) = D_{\beta}(\*V | \*W\*H) + \alpha\norm{\bm{\Lambda}\*H}_{1}$ consists in majorizing the
data-fitting and regularization terms separately, and adding up the resulting
functions.


\subsubsection{Majorization of the data-fitting term}

Producing an auxiliary function for the data-fitting term $\*H \mapsto D_{\beta}(\*V | \*W\*H)$
is a well-known problem and we use the existing results
of~\cite{Nakano_M_2010_p-ieee-mlsp_convergence_gmanmfbd,
  Fevotte_C_2011_j-neural-comput_algorithms_nmfbd,
  Yang_Z_2011_j-ieee-trans-nnet_unified_dmalqnmf}.
For all values of $\beta$, the data-fitting term can be decomposed into the sum of a
convex function and a concave function.
The convex term may be majorized using Jensen's inequality while the concave
term may be majorized with the tangent inequality.
Adding up the two resulting functions results in the auxiliary function
$G_{\beta}(\*H | \bHtilde)$ given in Table~\ref{table:def_g1}.
This procedure has been used in many NMF papers,
including~\cite{Leplat_V_2021_j-siam-j-matrix-anal-appl_multiplicative_unmfbddec},
and the details can be found in~\cite{Fevotte_C_2011_j-neural-comput_algorithms_nmfbd}.

\begin{table}[t]
  \caption{Expression of the auxiliary function $G_{\beta}(\*H | \bHtilde)$
    for the data-fitting term $\*H \mapsto D_{\beta}(\*V|\*W\*H)$, up to an additive constant
    (from~\cite{Fevotte_C_2011_j-neural-comput_algorithms_nmfbd}).
    We use the following notations: $\tilde{p}_{kn}$ denotes the elements of the
    matrix $ \*W^{\top}\tilde{\*S}_{\beta}$, where $\tilde{\*S}_{\beta}$ is computed
    from~\eqref{eq:shortcut1} with $\*H = \bHtilde$.
    Similarly, $\tilde{q}_{kn}$ denotes the elements of $ \*W^{\top}\tilde{\*T}_{\beta}$.}
  \begin{center}
    \setlength\tabcolsep{5.5pt}
    \begin{tabular}{ll}
      \toprule
       & \multicolumn{1}{c}{$G_{\beta}\left(\*H|\bHtilde\right)$} \\
      \midrule
      $\beta<1$ & $\displaystyle \sum_{k,n} \left[\tilde{q}_{kn} h_{kn} - \frac{1}{\beta-1}\tilde{p}_{kn}\tilde{h}_{kn} {\left(\frac{h_{kn}}{\tilde{h}_{kn}}\right)}^{\beta-1}\right]$ \\
      $\beta=1$ & $\displaystyle \sum_{k,n} \left[\tilde{q}_{kn} h_{kn} - \tilde{p}_{kn} \tilde{h}_{kn} \log \left( \frac{h_{kn}}{\tilde{h}_{kn}}\right)\right]$ \\
      $\beta\in(1,2]$ & $\displaystyle \sum_{k,n} \left[\frac{1}{\beta}\tilde{q}_{kn}\tilde{h}_{kn}{\left(\frac{h_{kn}}{\tilde{h}_{kn}}\right)}^{\beta} - \frac{1}{\beta-1}\tilde{p}_{kn}\tilde{h}_{kn} {\left(\frac{h_{kn}}{\tilde{h}_{kn}} \right)}^{\beta-1}\right]$ \\
      $\beta>2 $ & $\displaystyle \sum_{k,n} \left[\frac{1}{\beta}\tilde{q}_{kn}\tilde{h}_{kn} {\left(\frac{h_{kn}}{\tilde{h}_{kn}}\right)}^{\beta} - \tilde{p}_{kn}h_{kn}\right]$ \\
      \bottomrule
      \end{tabular}
  \end{center}
  \label{table:def_g1}
\end{table}


\subsubsection{Majorization of the regularization term}

We now address the majorization of $S(\*H) = \norm{\bm{\Lambda}\*H}_{1} = \sum_{k,n} \lambda_{k} h_{kn}$,
where we recall that $\lambda_{k} = \norm{\*w_{k}}_{1}$.
We need to distinguish two cases: when $\beta \le 1$, no majorization of $S$ is
actually needed.
Indeed, in that case we may use
\begin{align}
  G(\*H \mid \bHtilde) = G_{\beta}(\*H \mid \bHtilde) + \alpha \, S(\*H) \, ,
\end{align}
because $G(\*H | \bHtilde)$ has a simple closed-form minimizer, given
in Section~\ref{ssec:mini}.
When $\beta>1$, this property is no longer true.
In that case, we need to majorize $S(\*H)$ as well.
Following~\cite{Yang_Z_2011_j-ieee-trans-nnet_unified_dmalqnmf}, we use the
following inequality that holds for $h, \tilde{h}>0$ and $\beta>1$
\begin{align}
  \label{eq:monome}
  h \le \frac{\tilde{h}}{\beta} {\left(\frac{h}{\tilde{h}}\right)}^{\beta}
  + \tilde{h}\left(1-\frac{1}{\beta}\right)
  \, .
\end{align}
Note that the inequality is tight when $h = \tilde{h}$.
Applied term to term to $S(\*H)$, this leads to the following majorizer of the
regularization term
\begin{align*}
  G_{S}(\*H \mid \bHtilde) =
  \sum_{k,n} \lambda_{k} \frac{\tilde{h}_{kn}}{\beta}{\left(\frac{h_{kn}}{\tilde{h}_{kn}}\right)}^{\beta}
  + \text{cst}
  \, ,
\end{align*}
where $\text{cst}$ contains terms that are constant w.r.t. $h_{kn}$.
\footnote{We will use the same notation $\text{cst}$ in different places to
avoid cluttering, though the constants might be different.}
In the end, when $\beta > 1$, we use
\begin{align*}
  G(\*H \mid \bHtilde) = G_{\beta}(\*H \mid \bHtilde) + \alpha \, G_{S}(\*H \mid \bHtilde) \, ,
\end{align*}
which admits a simple closed-form solution given in the next section.
The extra majorization step~\eqref{eq:monome} essentially allows
$G(\*H | \bHtilde)$ to be composed of monomials of only two different orders,
$\beta$ and $\beta-1$, hence allowing for closed-form minimization.


\subsection{Minimization of the auxiliary function}
\label{ssec:mini}

The second step of MM consists of minimizing $G(\*H | \bHtilde)$ w.r.t. $\*H$.
By design, $G$ is smooth, separable and strictly convex and thus we only need to
set its gradient to zero (w.r.t. $\*H$).
This step involves standard calculus and leads in the end to the following
update
\begin{align*}
  h_{kn} =
  \tilde{h}_{kn} {\left(\frac{\tilde{p}_{kn}}{\tilde{q}_{kn} + \alpha\norm{\*w_{k}}_{1}}\right)}^{\gamma(\beta)}
  \, ,
\end{align*}
where $\tilde{p}_{kn}$ and $\tilde{q}_{kn}$ are defined in
Table~\ref{table:def_g1} and
\begin{equation*}
  \gamma(\beta) =
  \begin{cases*}
    \frac{1}{2-\beta} & if $\beta<1$     \\
                1 & if $\beta\in[1,2]$ \\
    \frac{1}{\beta-1} & if $\beta>2$ 
  \end{cases*} \, .
\end{equation*}
As explained before, a similar update can be derived for $w_{fk}$ by exchanging
the roles of $\*W$ and $\*H$.
In the end, this leads to the following multiplicative matrix updates
\begin{align}
  \*H \; &\longleftarrow \; \*H \odot {\left(\frac{\*W^{\top}\*S_{\beta}}{\*W^{\top}(\*T_{\beta}+\alpha\*1_{F \times N})}\right)}^{.\gamma(\beta)} \label{eq:mm-nmf_H} \\
  \*W \; &\longleftarrow \; \*W \odot {\left(\frac{\*S_{\beta}\*H^{\top}}{(\*T_{\beta}+\alpha\*1_{F \times N})\*H^{\top}}\right)}^{.\gamma(\beta)} \label{eq:mm-nmf_W}
  \, .
\end{align}
A pseudo-code of the resulting procedure, coined MM-SNMF-$\ell_{1}$, is given in
Algorithm~\ref{algo:mm-nmf}.
A few comments are in order.
First, as in standard NMF practice, only one step of MM is applied to $\*H$ and
$\*W$ in each iteration.
Applying several sub-iterations brings no benefit in practice.
Like $\mJ$, $\tilde{\mJ}$ or $\Lagr$, the objective function $\check{\mJ}$ is
non-convex w.r.t. $\*W$ and $\*H$.
When $\beta \in [1,2]$, the individual sub-problems in $\*W$ and $\*H$ are convex,
but $\check{\mJ}$ is still jointly non-convex.
As such, initialization matters and we will present average performance results
over several random initializations in Section~\ref{sec:simul}.
Several data-dependent initialization schemes are presented
in~\cite{Gillis_N_2020_book_nonnegative_mf}.
In the next section we discuss the convergence of the iterates produced
by Algorithm~\ref{algo:mm-nmf}.

\begin{algorithm}[t]
  \small
  \begin{algorithmic}[1]
    \renewcommand{\algorithmicrequire}{\textbf{Input:}}
    \renewcommand{\algorithmicensure}{\textbf{Output:}}
    \REQUIRE{Nonnegative matrix $\*V$, initialization
    $(\*W_{\text{init}},\*H_{\text{init}})$,\\ and $\alpha>0$.}
    \ENSURE{Nonnegative matrices $\*W$ and $\*H$ such that $\*V \approx \*W\*H$ with
      sparse $\*H$.}
    \STATE{Initialize $i$ to $0$.}
    \STATE{Initialize $(\*W_{i},\*H_{i})$ to $(\*W_{\text{init}},\*H_{\text{init}})$.}
    \REPEAT{}
    \STATE{Update $\*H_{i}$ using~\eqref{eq:mm-nmf_H}:
      \[
      \begin{aligned}
        \bVtilde &\leftarrow \*W_{i}\*H_{i} \\
        \*H_{i+1} &\leftarrow
        \*H_{i}\odot{\left(\frac{\*W_{i}^{\top}\left(\*V\odot\bVtilde^{.(\beta-2)}\right)}{\*W_{i}^{\top}(\bVtilde^{.(\beta-1)}+\alpha\*1_{F \times N})}\right)}^{.\gamma(\beta)}
      \end{aligned}
      \]}
    \STATE{Update $\*W_{i}$ using~\eqref{eq:mm-nmf_W}:
      \[
      \begin{aligned}
        \bVtilde &\leftarrow \*W_{i}\*H_{i+1} \\
        \*W_{i+1} &\leftarrow
        \*W_{i} \odot {\left(\frac{\left(\*V\odot\bVtilde^{.(\beta-2)}\right)\*H_{i+1}^{\top}}{(\bVtilde^{.(\beta-1)}+\alpha\*1_{F \times N})\*H_{i+1}^{\top}}\right)}^{.\gamma(\beta)}
      \end{aligned}
    \]}
    \STATE{Increment $i$.}
    \UNTIL{stopping criterion is met}
    \STATE{Rescale $\*W_{i}$ and $\*H_{i}$:
      \[
      \begin{aligned}
      \bm{\Lambda} &\leftarrow \Diag\left({(\norm{\*w_{k}}_{1})}_{k\in\nint{1,K}}\right) \\
      (\*W_{i},\*H_{i}) &\leftarrow (\*W_{i}\bm{\Lambda}^{-1},\bm{\Lambda}\*H_{i})
      \end{aligned}
      \]}
    \RETURN{$(\*W_{i},\*H_{i})$}
  \end{algorithmic}
  \caption{MM-SNMF-$\ell_{1}$~\label{algo:mm-nmf}}
\end{algorithm}


\subsection{Convergence}

By construction, the sequence of objective values produced by
Algorithm~\ref{algo:mm-nmf} is non-increasing.
Because $\check{\mJ}$ is bounded below by zero, the sequence thus converges.
The following theorem additionally states the convergence of the iterates.

\begin{theorem}
  \label{thm:cvg_it}
  For any data matrix $\*V\in\RR_{+}^{F \times N}$, rank $K\in\NN$ and regularization
  parameter $\alpha>0$, the sequence of iterates $\{\*W_{i}, \*H_{i}\}_{i\in\NN}$ generated by
  Algorithm~\ref{algo:mm-nmf} converges to the set of stationary points of
  Problem~\eqref{eq:pb3}.
  \footnote{Due to the coercivity and the continuity of $\check{\mJ}$, the
  sequence $\{\*W_{i}, \*H_{i}\}_{i\in\NN}$ has at least one limit point.
  As such, the convergence of $\{\*W_{i}, \*H_{i}\}_{i\in\NN}$ to the set of stationary
  points means that every limit point of $\{\*W_{i}, \*H_{i}\}_{i\in\NN}$ is a stationary
  point.
  A stationary point is defined as a feasible point for which the necessary
  optimality condition given by Euler's inequality holds.}
\end{theorem}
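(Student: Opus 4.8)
The plan is to derive the statement as a special case of the general convergence theory for multiplicative updates of regularized NMF developed in \cite{Zhao_R_2018_j-ieee-trans-sig-proc_unified_camuarnmf}. That framework guarantees that every limit point of an alternating multiplicative scheme is stationary once the following hold: the objective is continuously differentiable on the open orthant and bounded below; the scheme is monotonically non-increasing; each block update is the unique minimizer of a smooth, separable, strictly convex majorizer and hence a continuous map of the previous iterate; the iterate sequence stays in a compact set; and the fixed points of the update map coincide with the points satisfying Euler's inequality. My proof would verify these hypotheses for $\check{\mJ}$ and the updates \eqref{eq:mm-nmf_H}--\eqref{eq:mm-nmf_W} and then quote the cited theorem.

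Several of these are immediate. Monotonicity is the MM descent lemma \eqref{eq:desc_lemm} applied successively to the two blocks, and since $\check{\mJ}\ge 0$ the objective sequence converges. Section~\ref{ssec:mini} already showed each majorizer $G(\*H\mid\bHtilde)$ to be smooth, separable and strictly convex, with the unique minimizer \eqref{eq:mm-nmf_H}; the symmetry noted in Section~\ref{ssec:scale_inv_pb} carries this over to the $\*W$-block, so the block-update map is single-valued and continuous on the positive orthant. For the fixed-point characterization, a fixed point of \eqref{eq:mm-nmf_H} forces each entry either to make numerator and denominator equal, i.e.\ $\partial\check{\mJ}/\partial h_{kn}=0$, or to vanish; the subtlety is that a vanishing entry is \emph{locked} by the multiplicative form, so one must still certify the sign condition $\partial\check{\mJ}/\partial h_{kn}\ge 0$ there, which is exactly the boundary half of Euler's inequality and the reason the framework phrases stationarity through that inequality.

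The genuinely delicate hypothesis is compactness of the iterates, and it is where the scale-invariant reformulation cuts both ways. Because the data-fitting term and the penalty $\norm{\bm{\Lambda}\*H}_1=\sum_{f,k,n}w_{fk}h_{kn}$ depend on $\*W$ and $\*H$ only through their product, $\check{\mJ}$ is invariant under every column rescaling $(\*W,\*H)\mapsto(\*W\bm{\Delta},\bm{\Delta}^{-1}\*H)$ with $\bm{\Delta}$ positive diagonal. Consequently $\check{\mJ}$ is not coercive on $\RR_{+}^{F\times K}\times\RR_{+}^{K\times N}$ and its sublevel sets are unbounded, so monotonicity alone cannot confine the raw iterates; compactness has to be recovered on a normalized slice, which is the sense in which the coercivity invoked in the footnote must be read.

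This last point is the step I expect to be the main obstacle. A direct computation shows the updates \eqref{eq:mm-nmf_H}--\eqref{eq:mm-nmf_W} to be equivariant under the same rescaling, so the renormalized iterates $(\*W_i\bm{\Lambda}_i^{-1},\bm{\Lambda}_i\*H_i)$, with $\bm{\Lambda}_i=\Diag(\norm{\*w_k^{(i)}}_1)$, form a trajectory lying in the slice $\FF$, on which $\check{\mJ}=\mJ$, the penalty collapses to $\alpha\norm{\*H}_1$, and $\*W$ is confined to the compact unit $\ell_{1}$ simplex; the restricted objective is therefore coercive and the renormalized iterates are bounded. It then remains to bound the raw sequence: here one argues that at a bona fide stationary point the multiplicative ratio equals one on the support, so the raw update acts there as the identity and the column scales cease to drift, which together with the asymptotic regularity $\norm{\*W_{i+1}-\*W_i}\to 0$ and $\norm{\*H_{i+1}-\*H_i}\to 0$ (itself a consequence of the summable objective decrease and the strict convexity of the majorizers) keeps $\{\*W_i,\*H_i\}_{i\in\NN}$ in a compact set. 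With compactness secured, the remaining hypotheses hold and the theorem of \cite{Zhao_R_2018_j-ieee-trans-sig-proc_unified_camuarnmf} yields the convergence of $\{\*W_i,\*H_i\}_{i\in\NN}$ to the set of stationary points of Problem~\eqref{eq:pb3}.
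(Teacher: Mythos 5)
Your overall route is the paper's own: its proof consists precisely of invoking the convergence framework of~\cite{Zhao_R_2018_j-ieee-trans-sig-proc_unified_camuarnmf} and checking that the auxiliary functions behind~\eqref{eq:mm-nmf_H}--\eqref{eq:mm-nmf_W} satisfy the five properties of~\cite[Definition 2]{Zhao_R_2018_j-ieee-trans-sig-proc_unified_camuarnmf} (majorization, tightness, gradient depending only on $h_{kn}/\tilde{h}_{kn}$, matching directional derivatives at $\bHtilde$, strict convexity), which matches the first half of your proposal, including your remarks on zero-locking and Euler's inequality. Where you diverge is on coercivity, and there your critique is correct: since $\norm{\bm{\Lambda}\*H}_{1}=\sum_{f,n}{[\*W\*H]}_{fn}$, the objective $\check{\mJ}$ depends on $(\*W,\*H)$ only through the product $\*W\*H$ and is invariant under $(\*W,\*H)\mapsto(\*W\bm{\Delta},\bm{\Delta}^{-1}\*H)$ for any positive diagonal $\bm{\Delta}$, so its sublevel sets are unbounded. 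The paper asserts the opposite --- that coercivity ``is satisfied thanks to the regularization term $\norm{\bm{\Lambda}\*H}_{1}$'' --- and relies on it in the footnote to guarantee a limit point. You have identified a genuine flaw in the published argument, not a detail it silently handles.

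The difficulty is that your repair does not close this gap either. First, the equivariance you verify holds for a \emph{fixed} $\bm{\Delta}$, whereas the normalized points $(\*W_{i}\bm{\Lambda}_{i}^{-1},\bm{\Lambda}_{i}\*H_{i})$ use a matrix $\bm{\Lambda}_{i}$ that changes with $i$: they form a sequence in $\FF$ but not a trajectory of the update map~\eqref{eq:mm-nmf_H}--\eqref{eq:mm-nmf_W}, only of the modified scheme ``MM step followed by renormalization,'' whose fixed points would then have to be related to stationary points of Problem~\eqref{eq:pb3} by a separate argument (the restricted objective on $\FF$ is indeed coercive, as you say, since $\*W$ lies in a product of simplices and the penalty reduces to $\alpha\norm{\*H}_{1}$). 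Second, and more seriously, your final step --- multiplicative ratios near one at stationarity plus $\norm{\*W_{i+1}-\*W_{i}}\to 0$ implying that the raw iterates stay in a compact set --- is not a proof: vanishing successive differences are perfectly compatible with divergence (harmonic-type drift of the column scales is not excluded), and the asymptotic regularity you invoke is itself normally derived from strong-convexity estimates that hold uniformly only on compact sets, so the argument risks circularity. A rigorous statement must either settle for ``every limit point is stationary'' (boundedness being automatic along a convergent subsequence, with existence of limit points left open) or analyze the explicitly renormalized algorithm on the slice $\FF$. As written, your compactness step fails --- though it fails at exactly the spot where the paper's own one-line coercivity claim is wrong.
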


\begin{proof}
  The authors in~\cite{Zhao_R_2018_j-ieee-trans-sig-proc_unified_camuarnmf} prove
  the convergence of the iterates of a block-descent MM algorithm constructed
  like Algorithm~\ref{algo:mm-nmf} for the following problem
  \begin{equation*}
    \min_{\*W,\*H \ge 0} \quad D_{\beta}(\*V \mid \*W\*H) + \alpha_{1}\norm{\*H}_{1} + \alpha_{2}\norm{\*W}_{1}
    \, .
  \end{equation*}
  As a matter of fact, their proof of convergence can be applied step by step to
  our own block-descent MM approach for solving Problem~\eqref{eq:pb3}.
  Indeed, the auxiliary functions that we derived for $\*H$ and equivalently for
  $\*W$ satisfy the five properties required
  in~\cite[Definition 2]{Zhao_R_2018_j-ieee-trans-sig-proc_unified_camuarnmf} to
  establish convergence.
  Using $G(\*H | \bHtilde)$ for exposition, the five properties are the
  following.
  \begin{itemize}
  \item Property 1 and 2 correspond to Equations~\eqref{eq:mm_prop1}
    and~\eqref{eq:mm_prop2} that define a valid auxiliary function.
  \item Property 3 dictates that $\nabla_{h_{kn}} G(\*H | \bHtilde)$ is a function of
    $h_{kn}/\tilde{h}_{kn}$.
  \item Property 4 dictates that the directional derivatives of
    $G(\*H | \bHtilde)$ and $\check{\mJ}(\*H)$ coincide at $\*H = \tilde{\*H}$.
  \item Property 5 dictates that $G(\*H | \bHtilde)$ is strictly convex w.r.t.
    $\*H$.
  \end{itemize}
  
  Standard calculus and convex analysis show that Properties 3--5 are satisfied
  by $G(\*H | \bHtilde)$, whereas Properties 1--2 are satisfied by construction.
  The results of~\cite{Zhao_R_2018_j-ieee-trans-sig-proc_unified_camuarnmf} also
  require $\check{\mJ}$ to be coercive which is satisfied thanks to the
  regularization term $\norm{\bm{\Lambda}\*H}_{1}$.
\end{proof}


\section{Extension to NMF with the $\beta$-divergence and log-regularization}
\label{sec:log-snmf}

In this section, we extend the methodology of Section~\ref{ssec:heuristic} and
Section~\ref{sec:snmf} to sparse $\beta$-NMF with log-regularization.
More precisely, we are interested in solving
\begin{align}
  \label{eq:wellposed_log_pb}
  \min_{\*W,\*H \ge 0} {\mJ}_{\log}(\*W,\*H)
  \quad \textrm{s.t.} \ (\forall k\in \nint{1,K}) \norm{\*w_{k}}_{1}=1
  \, ,
\end{align}
where
\begin{align}
  \label{eq:Jlog}
  {\mJ}_{\log}(\*W,\*H) \defequal D_{\beta}(\*V\mid\*W\*H) + \alpha \sum_{k,n} \log(h_{kn} + \epsilon)
  \, .
\end{align}
The log-regularization term $\psi(x) = \log(\abs{x} + \epsilon)$ used in~\eqref{eq:Jlog}
was popularized by~\cite{Candes_E_2008_j-four-anal-appl_enhancing_srlm} for sparse
linear regression ($x \in \RR$).
For small positive $\epsilon$, this function is much sharper at the origin than the
$\ell_{1}$ norm.
As such, it accentuates the sparsity of the solutions, which can be necessary or
desired in practice.
In the context of NMF, it was considered in~\cite{Lefevre_A_2011_p-icassp_itakura_isnmfgs,
  Tan_V_2013_j-ieee-trans-pami_automatic_rdnmfd,
  Peng_C_2022_j-knowledge-bases-sys_log_snmfdr}.
Following the discussion in Section~\ref{ssec:well_pb}, the unit-norm
constraints in~\eqref{eq:wellposed_log_pb} ensure that the minimization problem
is well-posed.

Changing the regularization term in $\mJ(\*W,\*H)$ only influences the update of
$\*H$ in the Lagrangian and heuristic methods described in Sections~\ref{ssec:lagr}
and~\ref{ssec:heuristic}.
Indeed, the update of $\*W$ is unchanged given $\*H$.
This is not true with our approach described in Section~\ref{sec:snmf} because
the regularization term in the reformulated scale-invariant objective function
depends on both $\*W$ and $\*H$.
Furthermore, under the log-regularization term, the minimization problems w.r.t.
$\*W$ and $\*H$ are not exchangeable anymore but can still be handled in the MM
framework.

In Section~\ref{ssec:log-heuristic}, we first extend the method
of~\cite{LeRoux_J_2015_TR_sparse_nmfhwd} to Problem~\eqref{eq:wellposed_log_pb}
and derive heuristic multiplicative updates that appear to work in practice.
Then, we derive our principled block-descent MM algorithm in
Section~\ref{ssec:log-MM}.
The Lagrangian method from~\cite{Leplat_V_2021_j-siam-j-matrix-anal-appl_multiplicative_unmfbddec}
could be also extended to Problem~\eqref{eq:wellposed_log_pb} for values of $\beta \leq 1$
by combining the update of $\*W$ in Section~\ref{ssec:lagr} with our update of
$\*H$ in Section~\ref{ssec:log-MM}.
However this does not change the conclusions of Section~\ref{ssec:lagr} about
the limitations of the Lagrangian approach and we chose to omit this method in
our experimental comparisons when considering log-regularization.


\subsection{Heuristic multiplicative updates}
\label{ssec:log-heuristic}

We adapt the approach of~\cite{LeRoux_J_2015_TR_sparse_nmfhwd} by replacing $\*W$
with $\*W \bm{\Lambda}^{-1} $ in~\eqref{eq:Jlog} and using alternating multiplicative
updates of $\*W$ and $\*H$ derived using the heuristic~\eqref{eq:multidea}.
By standard calculus, this results in the following updates
\begin{align}
  \label{eq:log_heur_up1}
  \*H &\; \longleftarrow \; \*H \odot \frac{\*W^{\top}\*S_{\beta}}{\*W^{\top}\*T_{\beta}+\frac{\alpha}{\*H+\epsilon}} \\
  \*W &\; \longleftarrow \; \*W \odot \frac{\*S_{\beta}\*H^{\top}+\*1_{F \times F}(\*W\odot\*T_{\beta}\*H^{\top})}{\*T_{\beta}\*H^{\top}+\*1_{F \times F}(\*W\odot\*S_{\beta}\*H^{\top})}
  \label{eq:log_heur_up2} \\
  \*W &\; \longleftarrow \; \*W \bm{\Lambda}^{-1}
  \, ,
\end{align}
where $\*S_{\beta}$ and $\*T_{\beta}$ are defined in~\eqref{eq:shortcut1}
and~\eqref{eq:shortcut2}.
As stated before, only the update of $\*H$ is changed when compared to the
updates derived in Section~\ref{ssec:heuristic} for $\beta$-NMF with $\ell_{1}$
regularization.


\subsection{Block-descent majorization-minimization algorithm}
\label{ssec:log-MM}

We now apply the methodology of Section~\ref{sec:snmf} to
Problem~\eqref{eq:wellposed_log_pb}.
Following Section~\ref{ssec:scale_inv_pb} we can show that
Problem~\eqref{eq:wellposed_log_pb} is equivalent to
\begin{equation}
  \label{eq:rescaled_log_eq}
  \min_{\*W, \*H \ge 0 } \quad \check{\mJ}_{\log} \defequal 
  D_{\beta}(\*V \mid \*W\*H) + \alpha\sum_{k,n}\psi\left(\lambda_{k} h_{k,n}\right) \, ,
\end{equation}
where we recall that $\lambda_{k} = \norm{\*w_{k}}_{1}$.
As opposed to $\check{\mJ}$, the roles of $\*W$ and $\*H$ are not exchangeable
anymore in $\check{\mJ}_{\log}$ and we now proceed to derive separate MM updates
for the two factors.


\subsubsection{Update of $\*H$}

Given $\*W$, we start by constructing an auxiliary function $G(\*H|\bHtilde)$
for the function $C(\*H) = D_{\beta}(\*V | \*W\*H) + \alpha S(\*H)$, where
$S(\*H) = \sum_{k,n}\psi\left(\lambda_{k} h_{k,n}\right)$.
We use the same notations $C$, $S$ and $G$ as in Section~\ref{sec:snmf} in order
to avoid cluttering.
We majorize the data-fitting term with the same function $G_{\beta}(\*H | \bHtilde)$
than before, given in Table~\ref{table:def_g1}.
We now turn to the majorization of $S(\*H)$.

By concavity of the logarithm, the individual summands of $S(\*H)$ can be
majorized locally at $\bHtilde$ with the tangent inequality
\begin{align}
  \psi(\lambda_{k} h_{kn})
  \leq \psi(\lambda_{k} \tilde{h}_{kn})
  + \lambda_{k} \psi'(\lambda_{k}\tilde{h}_{k}) (h_{kn}-\tilde{h}_{kn}) \, ,
  \label{eq:tang}
\end{align}
where $\psi'(x) = 1/(x+\epsilon)$ for all $x \in \RR_{+}$.
From there, we need to distinguish two cases like in Section~\ref{ssec:aux_fct}.

When $\beta \le 1$, we may simply apply~\eqref{eq:tang} to and use the following
auxiliary function for $S(\*H)$
\begin{align*}
  G_{S}(\*H|\bHtilde)
  = \sum_{k,n} \frac{h_{kn}}{\tilde{h}_{kn} + \frac{\epsilon}{\lambda_{k}}} + \text{cst} \, ,
\end{align*}
where $\text{cst}$ contains terms that are constant w.r.t. $h_{kn}$.
This leads to an auxiliary function
$G(\*H|\bHtilde) = G_{\beta}(\*H|\bHtilde) + \alpha G_{S}(\*H|\bHtilde)$ that has a simple
closed-form minimizer when $\beta \le 1$.
The minimization is infeasible when $\beta >1 $ and we need to resort to an
additional majorization step, using again~\eqref{eq:monome}.
This leads to the following auxiliary function for $S(\*H)$
\begin{align*}
  G_{S}(\*H,\bHtilde)
  = \frac{1}{\beta} \sum_{k,n} \frac{\tilde{h}_{kn}}{\tilde{h}_{kn}+\frac{\epsilon}{\lambda_{k}}}
  {\left(\frac{h_{kn}}{\tilde{h}_{kn}}\right)}^{\beta}
  + \text{cst}
  \, .
\end{align*}
In the end, for all $\beta \in \RR$, $G$ is a smooth, separable and strictly convex
function that is easily minimized by setting its gradient to zero.
This leads to the following multiplicative update
\begin{equation}
  \label{eq:log_mm-nmf_h}
    \*H \; \longleftarrow \;
    \*H \odot {\left(\frac{\*W^{\top}\*S_{\beta}}{\*W^{\top}\*T_{\beta}+\frac{\alpha}{\*H+\frac{\epsilon}{\bm{\Upsilon}}}}\right)}^{.\gamma(\beta)}
    \, ,
\end{equation}
where $\bm{\Upsilon} = \*W^{\top}\*1_{F \times N}$.


\subsubsection{Update of $\*W$}
    
A very similar strategy can be employed for the update of $\*W$. Given $\*H$, we
now need to build an auxiliary function $F(\*W|\bWtilde)$ for the minimization
of $B(\*W) = D_{\beta}(\*V | \*W\*H) + \alpha R(\*W) $, where
$R(\*W) = \sum_{k,n} \psi(h_{kn}\norm{\*w_{k}}_{1})$.
The data-fitting term can be majorized by switching the roles of $\*W$ and $\*H$
in Table~\ref{table:def_g1}; slightly abusing the notations again we denote the
resulting auxiliary function by $G_{\beta}(\*W | \bWtilde)$.
Let us now address the majorization of $R(\*W)$.
Given the current update $\bWtilde$, we may again invoke the concavity of $\psi(x)$
to form the following inequality
\begin{align}
  \psi(h_{kn} \norm{\*w_{k}}_{1})
    & \leq  \psi(h_{kn}\norm{\bwtilde_{k}}_{1})
    + \frac{\norm{\*w_{k}}_{1}-\norm{\bwtilde_{k}}_{1}}{\norm{\bwtilde_{k}}_{1}+\frac{\epsilon}{h_{kn}}} \nonumber \\
    &= \frac{1}{\norm{\bwtilde_{k}}_{1}+\frac{\epsilon}{h_{kn}}} \sum_{f} w_{fk}  + \text{cst}
    \, .
    \label{eq:ineq2}
\end{align}
From there, we use a path that is similar to the update of $\*H$.
When $\beta \le 1$, we may simply apply inequality~\eqref{eq:ineq2} to the summands of
$R(\*W)$ and use the following majorizer
\begin{align*}
  F_{R}(\*W| \bWtilde)
  = \sum_{f,k} \left(\sum_{n}\frac{1}{\norm{\bwtilde_{k}}_{1}+\frac{\epsilon}{h_{kn}}}\right)w_{fk}
  + \textrm{cst}
  \, .
\end{align*}
When $\beta >1 $ we need to further majorize the terms $w_{fk}$
using~\eqref{eq:monome}.
In the end, the minimization of
$F(\*W| \bWtilde) = G_{\beta}(\*W | \bWtilde) + \alpha F_{R}(\*W | \bWtilde)$ leads to the
following multiplicative update of $\*W$
\begin{equation}
  \label{eq:log_mm-nmf_w}
    \*W \; \longleftarrow \;
    \*W \odot {\left(\frac{\*S_{\beta}\*H^{\top}}{\*T_{\beta}\*H^{\top}+ \*1_{F \times N}{\left(\frac{\alpha}{\bm{\Upsilon}+\frac{\epsilon}{\*H}}\right)}^{\top}}\right)}^{.\gamma(\beta)}
  \, .
\end{equation}


\subsubsection{Resulting algorithm and convergence}

Our resulting MM algorithm to address~\eqref{eq:rescaled_log_eq} is displayed in
Algorithm~\ref{algo:log_mm-nmf} and is referred to as MM-SNMF-log\@.
By design, the algorithm ensures that the sequence of objective values
${(\check{\mJ}_{\log}(\*W_{i},\*H_{i}))}_{i\in\NN}$ is non-increasing and convergent.
The convergence of the iterates can also be proven, using the same rationale as
the proof of Theorem~\ref{thm:cvg_it}.
We simply need $G$ and $F$ to verify the five properties listed in the proof,
which is easily checked.
Properties 1--3 hold by construction of the auxiliary functions, Property 4 can
be verified using standard calculus, and Property 5 results from convexity
properties.
In the end, we have derived the first universal algorithm for sparse $\beta$-NMF
with log-regularization.
The algorithm is simple to implement, has linear complexity per iteration and
can be applied for any value of $\beta \in \RR$.
It is free of tuning parameters and enjoys strong convergence properties.

\begin{algorithm}[t]
  \small
  \begin{algorithmic}[1]
    \renewcommand{\algorithmicrequire}{\textbf{Input:}}
    \renewcommand{\algorithmicensure}{\textbf{Output:}}
    \REQUIRE{Nonnegative matrix $\*V$,initialization
    $(\*W_{\text{init}},\*H_{\text{init}})$,\\ and $\alpha>0$.}
    \ENSURE{Nonnegative matrices $\*W$ and $\*H$ such that $\*V \approx \*W\*H$ with
      sparse $\*H$.}
    \STATE{Initialize $i$ to $0$.}
    \STATE{Initialize $(\*W_{i},\*H_{i})$ to $(\*W_{\text{init}},\*H_{\text{init}})$.}
    \REPEAT{}
    \STATE{$\bm{\Upsilon} \leftarrow \*W_{i}^{\top}\*1_{F \times N}$}
    \STATE{Update $\*H_{i}$ using~\eqref{eq:log_mm-nmf_h}:
      \[
      \begin{aligned}
        \bVtilde &\leftarrow \*W_{i}\*H_{i} \\
        \*H_{i+1} &\leftarrow
        \*H_{i}\odot{\left(\frac{\*W_{i}^{\top}\left(\*V\odot\bVtilde^{.(\beta-2)}\right)}{\*W_{i}^{\top}\bVtilde^{.(\beta-1)}+\frac{\alpha}{\*H_{i}+\frac{\epsilon}{\bm{\Upsilon}}}}\right)}^{.\gamma(\beta)}
      \end{aligned}
      \]}
    \STATE{Update $\*W_{i}$ using~\eqref{eq:log_mm-nmf_w}:
      \[
      \begin{aligned}
        \bVtilde &\leftarrow \*W_{i}\*H_{i+1} \\ 
        \*W_{i+1} &\leftarrow
        \*W_{i} \odot {\left(\frac{\left(\*V\odot\bVtilde^{.(\beta-2)}\right)\*H_{i+1}^{\top}}{ \bVtilde^{.(\beta-1)}\*H_{i+1}^{\top}+\*1{\left(\frac{\alpha}{\bm{\Upsilon}+\frac{\epsilon}{\*H_{i+1}}}\right)}^{\top}}\right)}^{.\gamma(\beta)}
      \end{aligned}
    \]}
    \STATE{Increment $i$.}
    \UNTIL{stopping criterion is met}
    \STATE{Rescale $\*W_{i}$ and $\*H_{i}$:
      \[
      \begin{aligned}
      \bm{\Lambda} &\leftarrow \Diag\left({(\norm{\*w_{k}}_{1})}_{k\in\nint{1,K}}\right) \\
      (\*W_{i},\*H_{i}) &\leftarrow (\*W_{i}\bm{\Lambda}^{-1},\bm{\Lambda}\*H_{i})
      \end{aligned}
      \]}
    \RETURN{$(\*W_{i},\*H_{i})$}
  \end{algorithmic}
  \caption{MM-SNMF-log~\label{algo:log_mm-nmf}}
\end{algorithm}


\section{Experimental Results}
\label{sec:simul}

In this section, we compare our MM methods against the Lagrangian and the
heuristic methods presented in Section~\ref{sec:soa} on four different datasets.
We first give an example showing that the heuristic is not a descent algorithm
in contrast with our MM method.
We then compare MM-SNMF-$\ell_{1}$ described by Algorithm~\ref{algo:mm-nmf} for solving
Problem~\eqref{eq:pb1} with its Lagrangian and heuristic counterparts presented in
Sections~\ref{ssec:lagr} and~\ref{ssec:heuristic}.
We refer to the latter two methods as L-SNMF-$\ell_{1}$ and H-SNMF-$\ell_{1}$ respectively.
We finally compare our algorithm MM-SNMF-log described by Algorithm~\ref{algo:log_mm-nmf}
with H-SNMF-log which is the variant of the heuristic method given in
Section~\ref{ssec:log-heuristic} and aimed at solving
Problem~\eqref{eq:wellposed_log_pb}.


\subsection{Description of the datasets and hyperparameter choices}

To compare the algorithms under realistic conditions, we select four different
datasets coming from various applications that are described below.
\begin{itemize}
\item The Olivetti dataset from AT\&T Laboratories
  Cambridge~\cite{Samaria_F_1994_p-wacv_parametrisation_smhfi} contains $400$
  greyscale images of faces with dimensions $64 \times 64$ that are vectorized and
  stored as the columns of $\*V$.
  From these images, NMF can be used to learn part-based features that are
  represented by the dictionary of features $\*W$~\cite{Lee_D_1999_j-nature_learning_ponmf}.
  The factor $\*H$ then contains the activation encodings of the features for
  each image of the collection.
\item We generate an audio magnitude spectrogram from an excerpt of the
  original recording of the song ``Four on Six'' by Wes Montgomery.
  The signal corresponds to the first five seconds of the song sampled at
  \SI{44.1}{\kilo\hertz}.
  The spectrogram is then computed with a Hamming window of length $1024$ (23ms)
  and with an overlap of $50\%$.
  The use of NMF in this context consists in extracting elementary audio
  time-frequency patterns represented in $\*W$ with their temporal activations
  given by $\*H$~\cite{Smaragdis_P_2014_j-ieee-sig-proc-mag_static_dssunmfuv}.
\item The Moffett dataset is a hyperspectral image with resolution $50 \times 50$
  pixels over $189$ spectral bands acquired over Moffett Field in 1997 by the
  Airborne Visible Infrared Imaging Spectrometer~\cite{Aviris_database}.
  Using NMF on such an image allows extracting a dictionary $\*W$ of individual
  spectra representing the different materials, as well as their relative
  proportions stored in
  $\*H$~\cite{BioucasDias_J_2012_j-ieee-j-sel-top-appl-earth-obs-rem-sens_hyperspectral_uogssrba}.
\item The TasteProfile
  dataset~\cite{Bertin-Mahieux_T_2011_book_million_sd} contains counts of songs
  played by users of a music streaming service.
  In this context, NMF may extract the user preferences represented by the
  matrix $\*W$ as well as the different song attributes represented by the
  matrix $\*H$~\cite{Hu_Y_2008_p-ieee-icdm_collaborative_fifd}.
  We apply a preprocessing to the dataset similarly to~\cite{Gouvert_O_2020_p-icml_ordinal_nmdr}
  and many other papers using this dataset: we keep only users and songs with
  more than twenty interactions.
  The latter preprocessing still results in a large and highly sparse dataset.
\end{itemize}
Table~\ref{table:dataset} displays the dimensions of each dataset together with
the values of $\beta$ and $\alpha$ that we have used in our experiments.
The value of $\alpha_{\ell_{1}}$ is used for Problem~\eqref{eq:pb1} while
the value of $\alpha_{\log}$ is used for Problem~\eqref{eq:wellposed_log_pb}.
The constant $\epsilon$ in the log-regularization is set to $0.01$. 
For the spectrogram and Moffett dataset, we perform tests with two different
values of $\beta$ and thus use different values of $\alpha$ accordingly.
Note that we have tested several values of the regularization parameter $\alpha$ and
we have chosen one that yields representative results.
In particular, for the chosen values, the regularization does not become
negligible in comparison with the data term and conversely.

The values of $\beta$ have been chosen according to standard practice, see,
e.g.,~\cite{Fevotte_C_2011_j-neural-comput_algorithms_nmfbd} and references in
Section~\ref{sec:obj}.
Values of $\beta$ in the $[0, 0.5]$ interval are recommended for audio spectra as
they give more importance to small-energy coefficients.
The value $\beta=1$ produces a data-fitting term that corresponds to the
log-likelihood of a Poisson model; this fits well with integer-valued data such
as counts (TasteProfile) or RGB images (Olivetti).
Values of $\beta$ in the $[1, 2]$ interval offer a good compromise between Poisson
and additive Gaussian noise assumptions, which suits well to hyperspectral data
(Moffett).

\begin{table}[t]
  \caption{Dimensions of the datasets used in our experiments}
    \begin{center}
      \setlength\tabcolsep{3pt}
      \begin{tabular}{lcccccc}
        \toprule
        & $F$ & $N$ & $K$ & $\beta$ & $\alpha_{\ell_{1}}$ & $\alpha_{\log}$ \\
        \midrule
        Olivetti     &  $4\,096$ &     $400$ & $10$ &      $1$ &      $0.01$ &          $5$ \\
        Spectrogram  &     $513$ &     $858$ & $10$ & $\{0,0.5\}$ &   $\{600,5\}$ &    $\{0.5,5\}$ \\
        TasteProfile & $16\,301$ & $12\,118$ & $50$ &      $1$ &      $5000$ &         $0.5$ \\
        Moffett      &     $189$ &  $2\,500$ &  $3$ & $\{1.3,2\}$ & $\{1000,0.05\}$ & $\{0.5,0.02\}$ \\
        \bottomrule
      \end{tabular}
    \end{center}
    \label{table:dataset}
\end{table}


\subsection{Set-up}

All the simulations presented in this section have been conducted in Matlab 2021a
running on an Intel i7-8650U CPU with a clock cycle of 1.90GHz shipped with 16GB
of memory.\footnote{Matlab code is available at
\href{https://arthurmarmin.github.io/research.html}{https://arthurmarmin.github.io/research.html}.}

For each dataset, we compare the factorization obtained by the different
methods from $50$ different initializations.
The elements of $(\*W_{\text{init}},\*H_{\text{init}})$ are drawn randomly
according to a half-normal distribution obtained by folding a centered Gaussian
distribution of standard deviation equal to $5$.


\subsubsection{Stopping criterion}

The following stopping criterion has been used for all the algorithms 
\begin{equation}
  \label{eq:stop_crit}
  \frac{\abs{\mJ(\*W^{-},\*H^{-})-\mJ(\*W,\*H)}}{\abs{\mJ(\*W,\*H)}}
  \leq \delta \; ,
\end{equation}
where $\delta$ is a tolerance set to $10^{-5}$, $\*W$ and $\*H$ are the current
iterates while $\*W^{-}$ and $\*H^{-}$ are the previous ones.
The regularization term in $\mJ$ depends on the context and is either
the $\ell_{1}$ or the log-regularization.
The absolute value at the denominator is necessary in the case of the
log-regularization since the logarithm function, and thus $\mJ$, could be
negative.
If the convergence is not reached after $5,000$ iterations, we stop the
algorithm and return the current estimated factor matrices.


\subsubsection{Implementation}

The pseudo-code for MM-SNMF-$\ell_{1}$ and MM-SNMF-log is shown in
Algorithms~\ref{algo:mm-nmf} and~\ref{algo:log_mm-nmf} respectively.
The implementation of H-SNMF-$\ell_{1}$ and H-SNMF-log is similar except that the
multiplicative rules are replaced by the ones given
by~\eqref{eq:heur_upH},~\eqref{eq:heur_upW}
and~\eqref{eq:log_heur_up1},~\eqref{eq:log_heur_up2} respectively, together with
the renormalization~\eqref{eq:renorm_w}.

The implementation of L-SNMF-$\ell_{1}$ follows the one of MM-SNMF-$\ell_{1}$ but uses the
multiplicative updates given by~\eqref{eq:lagr_upH} and~\eqref{eq:lagr_upW}
instead of~\eqref{eq:mm-nmf_H} and~\eqref{eq:mm-nmf_W}.
Furthermore, an additional step consisting in computing the optimal Lagrangian
multipliers has to be performed between the steps 4 and 5 of
Algorithm~\ref{algo:mm-nmf}.

The $\beta$-divergence $d_{\beta}(x|y)$ is not always well defined when $x$ or $y$ takes
the value zero due to the presence of quotients and logarithms.
Consequently, we use in practice $D_{\beta}(\*V+\kappa|\*W\*H+\kappa)$ with a small constant $\kappa$
instead of the objective function $D_{\beta}(\*V|\*W\*H)$ for numerical stability.
For the heuristic method, this leads to replacing $\*W\*H$ by $\*W\*H + \kappa$ in
the expression of the gradient and thus in the updates~\eqref{eq:heur_upH}
and~\eqref{eq:heur_upW}.
For our method and the Lagrangian one (which both rely on MM), we can also
safely replace $\*W\*H$ by $\*W\*H + \kappa$ in the multiplicative updates.
This can be proven by treating $\kappa$ as a ${(K + 1)}^{\text{th}}$ constant
component in the derivations like
in~\cite{Fevotte_C_2015_j-ieee-trans-img-proc_nonlinear_hurnmf}.


\subsubsection{Performance evaluation}

We compare the different algorithms with two metrics: their computational
efficiency (CPU time) and the quality of the returned solutions.
The latter is assessed by the value of the normalized objective function
$\mJ(\*W,\*H)/FN$ at the solution returned by the algorithms.


\subsection{Results}


\subsubsection{Descent property}
\label{sssec:descent-prop}

We illustrate in this section that H-SNMF-$\ell_{1}$ is not a descent algorithm unlike
MM-SNMF-$\ell_{1}$\@.
To this end, we generate a random data matrix $\*V$ of dimension $50 \times 40$ by
drawing its elements according to the same half-normal distribution used for
drawing the elements of $(\*W_{\text{init}},\*H_{\text{init}})$.
We apply both H-SNMF-$\ell_{1}$ and MM-SNMF-$\ell_{1}$ with $K=3$, $\beta=-0.5$, and $\alpha=5$.
Then, we plot the values of the normalized objective function for the first
fifty iterations in Figure~\ref{fig:comp_crit}.
We use the same initialization for both methods but do not plot the value of the
objection function at the initialization (iteration 0) for the sake of clarity.
We observe that the blue curve corresponding to MM-SNMF-$\ell_{1}$ is non-increasing
whereas the red curve representing H-SNMF-$\ell_{1}$ is oscillating.
This example demonstrates the theoretical advantage to use MM-SNMF-$\ell_{1}$ over
H-SNMF-$\ell_{1}$\@.
Furthermore, we observe on this example that H-SNMF-$\ell_{1}$ requires more
iterations than MM-SNMF-$\ell_{1}$ to reach a given value of the objective function
close to a local optimum.
This observation will be verified in the experiments on the datasets in the
next section.

\begin{figure}[!t]
  \centering
  \includegraphics[width=\linewidth]{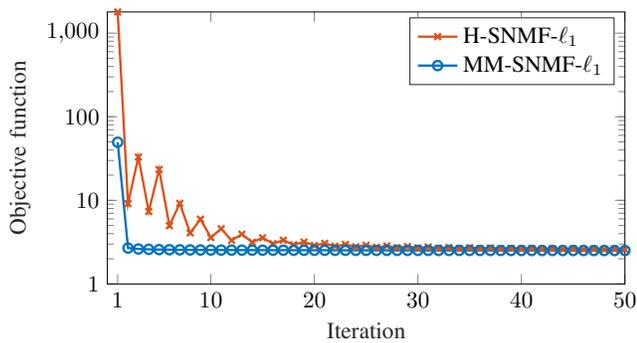}
  \caption{Values of the normalized objective function through the first hundred
    of iterations. Results obtained on synthetic data matrix $\*V$ with
    parameters $(F,N)=(50,40)$, $K=3$, $\beta=-0.5$.}
  \label{fig:comp_crit}
\end{figure}


\subsubsection{Performance comparison for Problem~\eqref{eq:pb1}}

We now run H-SNMF-$\ell_{1}$, L-SNMF-$\ell_{1}$, and MM-SNMF-$\ell_{1}$ on the four datasets.
The average values of the normalized objective function $\mJ/FN$ at the
solutions returned by the three algorithms are given in the top part of
Table~\ref{table:stats-l1}.
We observe first that optimal values of the objective function do not vary much
with the initialization for the three methods.
Moreover, we notice that H-SNMF-$\ell_{1}$ and MM-SNMF-$\ell_{1}$ yield solutions with a
similar quality while L-SNMF-$\ell_{1}$ may return higher-quality solutions.

On Figures~\ref{fig:oliv-taste_res},~\ref{fig:spectro_res},
and~\ref{fig:hyperspec_res}, we show the CPU time used by each method before
convergence.
For the sake of clarity, we only show the first $25$ realizations, the behaviour
of the others is similar.
We observe that the efficiency of the methods in term of CPU time depends on the
dataset, on the value of $\beta$ and on the initialization (e.g., for Moffett
or the spectrogram datasets).
However, we notice some general trends: for example, H-NMF-$\ell_{1}$ is always the
slowest on Olivetti and TasteProfile datasets.
The middle part of Table~\ref{table:stats-l1} exposes this trend by displaying
the corresponding average run times together with their standard deviations
shown within parentheses.
We observe that H-SNMF-$\ell_{1}$ is the slowest in average for every dataset while
MM-SNMF-$\ell_{1}$ is the fastest one except for TasteProfile for which L-SNMF-$\ell_{1}$ is
13$\%$ faster.

The three algorithms do not follow the same path in the optimization space.
In particular, we can observe in the bottom part of Table~\ref{table:stats-l1}
that MM-SNMF-$\ell_{1}$ converges in fewer iterations than H-SNMF-$\ell_{1}$\@.
Since the complexity per iteration of these methods are similar---compare the
multiplicative updates~\eqref{eq:heur_upH} with~\eqref{eq:mm-nmf_H}
and~\eqref{eq:heur_upW} with~\eqref{eq:mm-nmf_W}---this explains the observed
difference in CPU time.
Furthermore, one can see that L-SNMF-$\ell_{1}$ converges in a smaller number of
iterations than MM-SNMF-$\ell_{1}$ for some datasets.
However, its iterations are more expensive due to the update of the Lagrangian
multipliers through a Newton-Raphson iterative method.

\begin{figure}[!t]
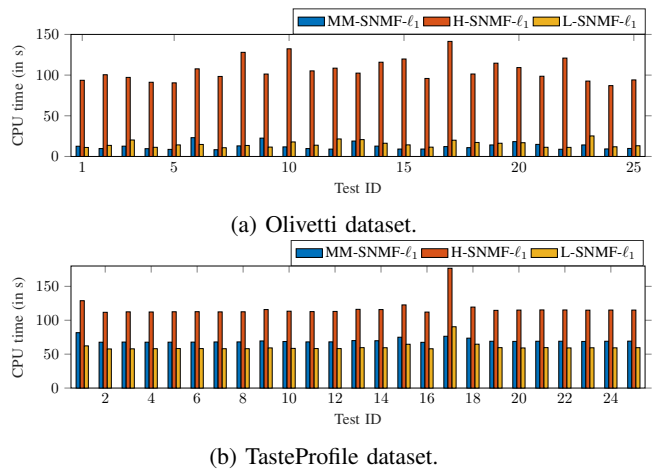

  \centering
  \begin{subfigure}[b]{\columnwidth}
    \includegraphics[width=\linewidth]{oliv_res.tex}
    \caption{Olivetti dataset.}
  \end{subfigure}
  \begin{subfigure}[b]{\columnwidth}
    \includegraphics[width=\linewidth]{tasteprof_res.tex}
    \caption{TasteProfile dataset.}
  \end{subfigure}     
  \caption{Comparative performance with Olivetti and TasteProfile datasets using
    the $\ell_{1}$-regularization ($\beta=1$).}
  \label{fig:oliv-taste_res}
\end{figure}

\begin{figure}[!t]
  \centering
  \begin{subfigure}[b]{\columnwidth}
    \includegraphics[width=\linewidth]{spectro_res.tex}
    \caption{$\beta=0$.}
  \end{subfigure}
  \begin{subfigure}[b]{\columnwidth}
    \includegraphics[width=\linewidth]{spectro-beta=05_res.tex}
    \caption{$\beta=0.5$.}
  \end{subfigure}     
  \caption{Comparative performance with a spectrogram using the
    $\ell_{1}$-regularization.}
  \label{fig:spectro_res}
\end{figure}

\begin{figure}[!t]
  \centering
  \begin{subfigure}[b]{\columnwidth}
    \includegraphics[width=\linewidth]{hyperspec-beta=13_res.tex}
    \caption{$\beta=1.3$.}
  \end{subfigure}
  \begin{subfigure}[b]{\columnwidth}
    \includegraphics[width=\linewidth]{hyperspec-beta=2_res.tex}
    \caption{$\beta=2$.}
  \end{subfigure}     
  \caption{Comparative performance with Moffett dataset using the
    $\ell_{1}$-regularization.}
  \label{fig:hyperspec_res}
\end{figure}

\begin{table}[t]
  \caption{Statistics for the three algorithms designed to solve Problem~\eqref{eq:pb1}.
    The top section shows the average values of the objective function $\mJ/FN$
    at the returned solutions.
    The middle section gives the average CPU times with the lowest ones
    highlighted in bold.
    The bottom section yields the corresponding average number of iterations.
    Standard deviations are given within parentheses.}
  \begin{center}
    \setlength\tabcolsep{2.5pt}
    \begin{tabular}{lccc}
      \toprule
      & L-SNMF-$\ell_{1}$ & H-SNMF-$\ell_{1}$ & MM-SNMF-$\ell_{1}$ \\
      \midrule
      &\multicolumn{3}{c}{\textbf{Objective function}} \\
      Olivetti              & 3.16   ($\pm$7E-3) &  3.16  ($\pm$9E-3) & 3.16   ($\pm$6E-3) \\
      Spectrogram ($\beta=0$)   & 0.88   ($\pm$3E-2) & 20.3   ($\pm$3E-2) & 20.3   ($\pm$3E-2) \\
      Spectrogram ($\beta=0.5$) & 0.60   ($\pm$3E-2) &  2.98  ($\pm$7E-3) & 2.98   ($\pm$6E-3) \\
      TasteProfile          & 0.76   ($\pm$7E-6) &  9.15  ($\pm$5E-6) & 9.15   ($\pm$5E-6) \\
      Moffett ($\beta=1.3$)     & ---              &  0.17  ($\pm$2E-5) & 0.17   ($\pm$2E-4) \\
      Moffett ($\beta=2$)       & 4.1E-3 ($\pm$7E-3) & 4.6E-3 ($\pm$1E-2) & 4.6E-3 ($\pm$7E-3) \\
      \midrule
      &\multicolumn{3}{c}{\textbf{CPU time}} \\
      Olivetti              &         14.1s  ($\pm$ 3.8) & 103.4s ($\pm$16.8) & \bfemph{11.2s} ($\pm$3.5) \\
      Spectrogram ($\beta=0$)   &          6.5s  ($\pm$ 3.4) &   6.4s ($\pm$ 2.2) & \bfemph{ 6.3s} ($\pm$1.8) \\
      Spectrogram ($\beta=0.5$) &         23.5s  ($\pm$ 2.5) &  24.5s ($\pm$ 6.3) & \bfemph{22.5s} ($\pm$7.2) \\
      TasteProfile          & \bfemph{60.6s} ($\pm$ 6.5) & 117.5s ($\pm$12.9) &         69.8s  ($\pm$3.4) \\
      Moffett ($\beta=1.3$)     &            ---           &   1.2s ($\pm$ 0.6) & \bfemph{ 0.9s} ($\pm$0.4) \\
      Moffett ($\beta=2$)       &          0.8s  ($\pm$ 0.3) &   1.1s ($\pm$ 0.3) & \bfemph{ 0.6s} ($\pm$0.1) \\
      \midrule
      &\multicolumn{3}{c}{\textbf{Number of iterations}} \\
      Olivetti              & 763 ($\pm$112) & 947 ($\pm$ 99) & 767 ($\pm$111) \\
      Spectrogram ($\beta=0$)   & 219 ($\pm$109) & 160 ($\pm$ 46) & 239 ($\pm$ 55) \\
      Spectrogram ($\beta=0.5$) & 197 ($\pm$ 93) & 144 ($\pm$ 37) & 183 ($\pm$ 55) \\
      TasteProfile          &  14 ($\pm$  0) &  23 ($\pm$  0) &  23 ($\pm$  0) \\
      Moffett ($\beta=1.3$)     & ---          &  12 ($\pm$  6) &  11 ($\pm$  6) \\
      Moffett ($\beta=2$)       & 137 ($\pm$ 25) & 133 ($\pm$ 49) &  95 ($\pm$ 19) \\
      \bottomrule
    \end{tabular}
  \end{center}
  \label{table:stats-l1}
\end{table}


\subsubsection{Performance comparison for Problem~\eqref{eq:wellposed_log_pb}}

We now compare H-SNMF-log with MM-SNMF-log.
Similarly to the previous section, the statistics on the values of the objective
function, on the CPU times, and on the numbers of iterations are shown in
Table~\ref{table:stats-log} while Figures~\ref{fig:log-oliv-taste_res},~\ref{fig:log-spectro_res},
and~\ref{fig:log-hyperspec_res} display the CPU time for the first $25$
Monte-Carlo realizations.
Results similar to the $\ell_{1}$-regularization can be observed: both methods return
solutions of nearly same quality whereas MM-SNMF-$\ell_{1}$ is significantly faster
for all datasets except for the spectrogram, for which both methods use in
average the same CPU time.
This difference in CPU time is explained by the number of iterations before
convergence as shown in Table~\ref{table:stats-log}: MM-SNMF-$\ell_{1}$ takes on
average about $20\%$ to $50\%$ less iterations than H-SNMF-$\ell_{1}$ to converge.
The difference in CPU time is particularly significant for the large scale
dataset TasteProfile.

\begin{figure}[!t]
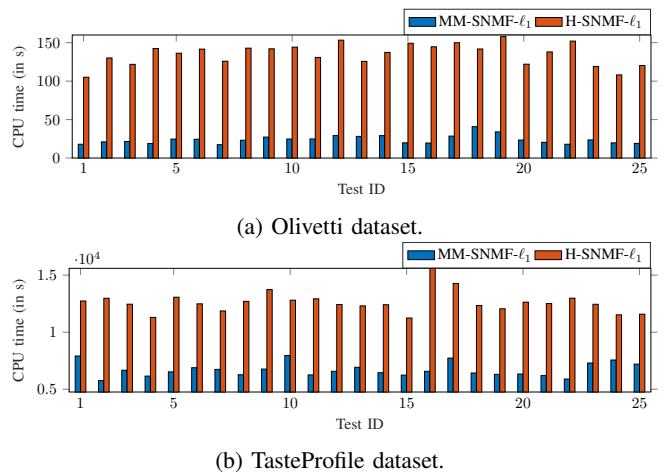

  \centering
  \begin{subfigure}[b]{\columnwidth}
    \includegraphics[width=\linewidth]{log-oliv_res.tex}
    \caption{Olivetti dataset.}
  \end{subfigure}
  \begin{subfigure}[b]{\columnwidth}
    \includegraphics[width=\linewidth]{log-tasteprof_res.tex}
    \caption{TasteProfile dataset.}
  \end{subfigure}     
  \caption{Comparative performance with Olivetti and TasteProfile datasets using
    the log-regularization ($\beta=1$).}
  \label{fig:log-oliv-taste_res}
\end{figure}

\begin{figure}[!t]
  \centering
  \begin{subfigure}[b]{\columnwidth}
    \includegraphics[width=\linewidth]{log-spectro_res.tex}
    \caption{$\beta=0$.}
  \end{subfigure}
  \begin{subfigure}[b]{\columnwidth}
    \includegraphics[width=\linewidth]{log-spectro-beta=05_res.tex}
    \caption{$\beta=0.5$.}
  \end{subfigure}
  \caption{Comparative performance with a spectrogram using the
    log-regularization.}
  \label{fig:log-spectro_res}
\end{figure}

\begin{figure}[!t]
  \centering
  \begin{subfigure}[b]{\columnwidth}
    \includegraphics[width=\linewidth]{log-hyperspec-beta=13_res.tex}
    \caption{$\beta=1.3$.}
  \end{subfigure}
  \begin{subfigure}[b]{\columnwidth}
    \includegraphics[width=\linewidth]{log-hyperspec-beta=2_res.tex}
    \caption{$\beta=2$.}
  \end{subfigure}     
  \caption{Comparative performance with Moffett dataset using the
    log-regularization.}
  \label{fig:log-hyperspec_res}
\end{figure}

\begin{table}[t]
  \caption{Statistics for the three algorithms designed to solve
    Problem~\eqref{eq:wellposed_log_pb}.
    The top section shows the average values of the objective function $\mJ/FN$
    at the returned solutions.
    The middle section gives the average CPU times with the lowest ones
    highlighted in bold.
    The bottom section yields the corresponding average number of iterations.
    Standard deviations are given within parentheses.
    Average numbers of iterations to solve Problem~\eqref{eq:wellposed_log_pb}.}
  \begin{center}
    \setlength\tabcolsep{5.5pt}
    \begin{tabular}{lcc}
      \toprule
      & H-SNMF-log & MM-SNMF-log \\
      \midrule
      &\multicolumn{2}{c}{\textbf{Objective function}} \\
      Olivetti              &  1.96    ($\pm$9E-3) &  1.96    ($\pm$7E-3) \\
      Spectrogram ($\beta=0$)   &  5.16E-1 ($\pm$4E-3) &  5.09E-1 ($\pm$4E-3) \\
      Spectrogram ($\beta=0.5$) & -6.58E-3 ($\pm$7E-2) & -5.68E-2 ($\pm$9E-3) \\
      TasteProfile          &  1.77E-2 ($\pm$4E-5) &  1.76E-2 ($\pm$2E-5) \\
      Moffett ($\beta=1.3$)     &  1.13E-3 ($\pm$6E-5) &  1.08E-2 ($\pm$5E-5) \\
      Moffett ($\beta=2$)       &  1.03E-3 ($\pm$1E-5) &  1.02E-2 ($\pm$1E-5) \\
      \midrule
      &\multicolumn{2}{c}{\textbf{CPU time}} \\
      Olivetti              &        132s  ($\pm$ 15) &   \bfemph{23s} ($\pm$  6) \\
      Spectrogram ($\beta=0$)   & \bfemph{17s} ($\pm$  4) &   \bfemph{17s} ($\pm$  4) \\
      Spectrogram ($\beta=0.5$) & \bfemph{34s} ($\pm$  6) &          39s  ($\pm$  9) \\
      TasteProfile          &      12613s  ($\pm$927) & \bfemph{6704s} ($\pm$603) \\
      Moffett ($\beta=1.3$)     &         25s  ($\pm$  4) &  \bfemph{ 16s} ($\pm$  2) \\
      Moffett ($\beta=2$)       &         10s  ($\pm$  2) &    \bfemph{7s} ($\pm$  3) \\
      \midrule
      &\multicolumn{2}{c}{\textbf{Number of iterations}} \\
      Olivetti              & 1180 ($\pm$ 130) & 920 ($\pm$112) \\
      Spectrogram ($\beta=0$)   &  512 ($\pm$  86) & 611 ($\pm$132) \\
      Spectrogram ($\beta=0.5$) &  198 ($\pm$  33) & 300 ($\pm$ 73) \\
      TasteProfile          & 1900 ($\pm$ 131) & 929 ($\pm$ 83) \\
      Moffett ($\beta=1.3$)     & 1001 ($\pm$ 240) & 801 ($\pm$206) \\
      Moffett ($\beta=2$)       & 1235 ($\pm$ 282) & 959 ($\pm$286) \\
      \bottomrule
    \end{tabular}
  \end{center}
  \label{table:stats-log}
\end{table}


\section{Conclusion}
\label{sec:concl}

We have presented a block-descent MM algorithm for $\beta$-NMF with
$\ell_{1}$-regularization or log-regularization on one factor and unit $\ell_{1}$-norm
constraint on the columns of the other.
Our algorithm takes the form of iterative multiplicative updates with are simple
and efficient to compute.
In contrast with state-of-the-art methods, our resulting algorithm can be
applied to every $\beta$-divergence and owns desirable theoretical properties
such as non-increasingness, convergence of the objective function as well as
the convergence of its iterates to the set of stationary points of the problem.
Furthermore, we have observed experimentally that our MM algorithm estimates
factors with competitive quality and leads in many cases to a significant
decrease of CPU time when compared to state-of-the-art methods.


\bibliographystyle{IEEEtran}
\bibliography{abbr,mybiblio}


\end{document}